\DeclareMathOperator*{\argmin}{arg\,inf}
\newcommand{\ie}{\textit{i.e., }}
\newcommand{\eg}{\textit{e.g., }}
\newcommand{\dist}{p}
\newcommand{\cum}{F}
\newcommand{\dec}{G}
\newcommand{\quant}{q}
\newcommand{\lquant}{\underline\quant}
\newcommand{\uquant}{\overline\quant}
\DeclarePairedDelimiter\ceil{\lceil}{\rceil}
\DeclarePairedDelimiter\floor{\lfloor}{\rfloor}
\newtheorem{pro}{Proposition}
\newtheorem{ex}{Example}
\newtheorem{lem}{Lemma}
\newtheorem{defi}{Definition}
\begin{document}
%
\title{Optimizing Quantiles in \\
Preference-based Markov Decision Processes}


\author{
Hugo Gilbert\footnote{hugo.gilbert@lip6.fr}
\and
Paul Weng\footnote{paweng@cmu.edu} \and Yan Xu\footnote{xuyan@cmu.edu}
}


\maketitle
\begin{abstract}
In the Markov decision process model, policies are usually evaluated by expected cumulative rewards.
As this decision criterion is not always suitable, we propose in this paper an algorithm for computing a policy optimal for the quantile criterion.
Both finite and infinite horizons are considered.
Finally we experimentally evaluate our approach on random MDPs and on a data center control problem.
\end{abstract}

\section{Introduction}

Sequential decision-making in uncertain environments is an important task in artificial intelligence. 
Such problems can be modeled as Markov Decision Processes (MDPs). 
In an MDP, an agent chooses at every time step actions to perform according to the current state of the world in order to optimize a criterion in the long run. 
In standard MDPs, uncertainty is described by probabilities over the possible action outcomes, preferences are represented by numeric rewards and the expectation of future cumulated rewards is used as the decision criterion. 
And yet, for numerous applications, the expectation of cumulated rewards may not be the most appropriate criterion. 
For instance, in one-shot decision-making problems an alternative and well motivated objective for the agent is to insure a certain level of satisfaction with high probability. 

In this paper we focus on the decision criterion that consists in maximizing a quantile.
Intuitively, the $\tau$th quantile of a population is the value $x$ such that $100\cdot\tau$ percent of the population is equal or lower than $x$ and $100\cdot(1-\tau)$ percent of the population is equal or greater than $x$. 
Optimizing a quantile criterion offers nice properties: 
i) no assumption is made about the commensurability between preferences and uncertainty, 
ii) preferences over actions or trajectories can be expressed on a purely ordinal scale, 
iii) preferences induced over policies are more robust than with the standard criterion of maximizing the expectation of cumulated rewards.

As a result, maximizing a quantile is used in many applications. For instance, the \emph{Value-at-Risk} criterion \cite{Jorion06} widely used in finance is in fact a quantile. 
Moreover, in the Web industry \cite{WolskiBrevik14,DeCandiaHastorunJampaniKakulapatiLakshmanPilchinSivasubramanianVosshallVogels07}, decisions about performance or Quality-Of-Service are often made based on quantiles. 
For instance, Amazon reports \cite{DeCandiaHastorunJampaniKakulapatiLakshmanPilchinSivasubramanianVosshallVogels07} that they optimize the 99.9\% quantile for their cloud services.
More generally, in the service industry, because of skewed distributions \cite{BenoitVandenPoel09}, one generally does not want that customers are satisfied on average, but rather that most customers (\eg 99\% of them) to be as satisfied as possible.

\textbf{Our contribution:} 
We show that optimizing the quantile criterion amounts to solving a sequence of MDP problems using an Expected Utility criterion with a target utility function.
We provide a binary search algorithm using functional backward induction \cite{LiuKoenig06} as a subroutine for computing an optimal policy.  
Moreover, we investigate some properties of the optimal policies in the finite and infinite cases. Finally, we provide the results of experiments testing our algorithm in a variety of settings. 

The paper is organized as follows.
Section~\ref{sec:background} introduces the necessary background to present our approach and state formally our problem.
Section~\ref{sec:algo} presents the details of our solving algorithm for the finite horizon case.
Section~\ref{sec:infinite} provides some theoretical results in the infinite horizon case.
In Section~\ref{sec:expe}, we experimentally evaluate our proposition.
Section~\ref{sec:related} discusses the related work and
Section~\ref{sec:conclusion} concludes.

\section{Background}\label{sec:background}
In this section, we provide the background information necessary for the sequel.

\subsection{Markov Decision Process}

\textit{Markov Decision Processes} (MDPs) offer a general and powerful formalism to model and solve sequential decision-making problems \cite{Puterman94}.
An MDP is formally defined as a tuple $\mathcal{M}_T=(\mathcal{S},\mathcal{A},\mathcal{P},r,s_0)$ where $T$ is a time horizon, $\mathcal{S}$ is a finite set of states, $\mathcal{A}$ is a finite set of actions, $\mathcal{P}:\mathcal{S}\times\mathcal{A}\times\mathcal{S} \rightarrow \mathbb{R}$ is a transition function with $\mathcal{P}(s,a,s')$ being the probability of reaching state $s'$ when action $a$ is performed in state $s$, $r:\mathcal{S}\times \mathcal{A} \rightarrow \mathbb{R}$ is a bounded reward function and $s_0\in\mathcal{S}$ is a particular state called initial state. 

In a nutshell, at each time step $t$, the agent knows her current state $s_t$. 
According to this state, she decides to perform an action $a_t$. 
This action results in a new state $s_{t+1} \in \mathcal{S}$ according to probability distribution $\mathcal{P}(s_t,a_t,.)$, and a reward signal $r(s_t, a_t)$ which penalizes or reinforces the choice of this action. 
At time step $t=0$, the agent is in the initial state $s_0$. 
We will call $t$-history $h_t$ a succession of $t$ state-action pairs starting from state $s_0$ (\eg $h_t = (s_0,a_0,s_1,\ldots,s_{t-1},a_{t-1},s_t)$). 
We call episode a $T$-history and denote $\mathcal{E}$ the set of episodes. 

The goal of the agent is to determine a policy, \ie a procedure to select an action in a state, that is optimal for a given criterion. 
More formally, a {\em policy} $\pi$ at an horizon $T$ is a sequence of $T$ decision rules $(\delta_1,\ldots,\delta_{T})$. 
{\em Decision rules} are functions which prescribe the actions that the agent should perform. 
They are \textit{Markovian} if they only depend on the current state.  
Moreover, a decision rule is either {\em deterministic} if it always selects the same action in a given state or {\em randomized} if it can prescribe a probability distribution over possible actions. 
A policy can be \textit{Markovian}, \textit{deterministic} or  \textit{randomized} according to the type of its decision rules. 
Lastly, a policy is \textit{stationary} if it uses the same decision rule at every time step, \ie $\pi = (\delta,\delta,\ldots)$.

Different criteria can be defined in order to compare policies.
One standard criterion is {\em expected cumulated reward}, for which it is known that an optimal deterministic Markovian policy exists at any horizon $T$.
This criterion is defined as follows.
First, the value of a history $h_t = (s_0,a_0,s_1,\ldots,s_{t-1},a_{t-1},s_t)$ is described as the sum of rewards obtained along it, \ie $r(h_t) =  \sum_{i=0}^{t-1} r(s_{i}, a_i)$.
Then, the value of a policy $\pi = (\delta_1, \ldots, \delta_T)$ in a state $s$ is set to be the expected value of the histories that can be generated by $\pi$ from $s$.
This value, given by the {\em value function} $v^\pi_1 : \mathcal S \to \mathbb R$ can be computed iteratively as follows:
\begin{align}
      v^{\pi}_{T+1}(s)&= 0 \nonumber\\ 
v^{\pi}_t(s)&=  r(s,\delta_t(s)) + \sum_{s'\in \mathcal{S}}\mathcal{P}(s,\delta_t(s),s')v^{\pi}_{t+1}(s') \label{eq:value_function}
\end{align}

The value $v^{\pi}_t(s)$ is the expectation of cumulated rewards obtained by the agent if she performs action $\delta_t(s)$ in state $s$ at time step $t$ and continues to follow policy $\pi$ thereafter. 
The higher the values of $v^{\pi}_t(s)$ are, the better. 
Therefore, value functions induce a preference relation $\succsim_{\pi}$ over policies in the following way:
\begin{align*}
\pi \succsim_{\pi} \pi' \Leftrightarrow \forall s\in \mathcal{S}, \forall t=1,\ldots, T, v^{\pi}_t(s) \geq v^{\pi'}_t(s)
\end{align*}

A solution to an MDP is a policy, called {\em optimal policy}, that ranks the highest with respect to $\succsim_{\pi}$. Such a policy can be found by solving the \textit{Bellman equations}.
\begin{align*}
      v^{*}_{T+1}(s)&= 0 \\ 
      v^{*}_t(s)&=\max_{a\in \mathcal{A}}  r(s,a)+ \sum_{s'\in \mathcal{S}}\mathcal{P}(s,a,s')v^{*}_{t+1}(s')
\label{eq:bellman}
\end{align*}
As can be seen, the preference relation $\succsim_{\pi}$ over policies is directly induced by the reward function $r$.

The decision criterion, based on the expectation of cumulated rewards, may not always be suitable.
Firstly, unfortunately, in many cases, the reward function $r$ is not known.
One can therefore try to uncover the reward function by interacting with an {\em expert} of the domain considered \cite{ReganBoutilier09,WengZanuttini13}. 
However, even for an expert user, the elicitation of the reward function can be burdensome. 
Indeed, this process can be cognitively very complex as it requires to balance several criteria in a complex manner and as it can imply a large number of parameters. 
In this paper, we address this problem by only assuming that we have a strict weak ordering on episodes. 

Secondly, for numerous applications, the expectation of cumulated reward, as used in Equation~\ref{eq:value_function}, may not be the most appropriate criterion (even when a numeric reward function is defined). 
For instance, in the Web industry, most decisions about performance are based on the minimal quality of $99\%$ of the possible outcomes. 
Therefore, in this article we aim at using a quantile (defined in Section~\ref{sec:quantile}) as a decision criterion to solve an MDP. 

\subsection{Preferences over Histories}

For generality's sake, contrary to standard MDPs, we define in this work the reward function to take values in a set $\mathcal R$.
Moreover, we assume that the values of histories take values in a set $\mathcal W$, called the wealth level space, and that the value of a history $h_t = (s_0, a_0, s_1, \ldots, s_t)$ is defined by:
\begin{align*}
w(h_0) = w_0 \hspace{0.5cm} w(h_t) = w(h_{t-1}) \circ r(s_{t-1}, a_{t-1})
\end{align*}
where $h_{t-1} = (s_0, a_0, s_1, \ldots, s_{t-1})$, $\circ$ is a binary operation from $\mathcal W \times \mathcal R$ to $\mathcal W$ and $w_0 \in \mathcal W$ is the left identity element of $\circ$.
Let $\mathcal W_T \subset \mathcal W$ be the set of wealth levels of $T$-histories.
We make three assumptions about $\mathcal W_T$:
\begin{itemize}
\item It is ordered by a total order $\preceq_{\mathcal W}$, which defines how $T$-histories are compared,
\item It admits a lowest element, denoted $w_{\min}$ and a greatest element, denoted $w_{\max}$ for order $\preceq_{\mathcal W}$.
\item A distance consistent with $\preceq_{\mathcal W}$ is defined over $\mathcal W_T$. It is denoted $d(w, w')$ for any pair $(w, w') \in \mathcal W_T \times \mathcal W_T$.
\end{itemize}
Note that when a distance is defined, for any pair $(w, w')$, its set of mid-elements is also defined $mid(w, w') = \argmin\{ \max(d(w, w''), d(w', w'')) \,|\, w'' \in \mathcal W_T \}$.

In a numerical context, the possible wealth levels of a state are the possible sums (resp. $\gamma$-discounted sums) of rewards that can be obtained during an episode. 
We have $w_{\max} = R_{\max} T$ (resp. $w_{\max} = R_{\max} \frac{(1-\gamma)^T}{1-\gamma}$) with $R_{\max}$ being the highest reward and $mid(w, w') = \{(w+w')/2\}$.
In the most general case, the possible wealth levels of a state are the possible histories (or more precisely their equivalent classes) that can be obtained during an episode. 
Here, if the equivalence classes are known and denoted by $w_1 \prec_{\mathcal W} w_2 \prec_{\mathcal W} \ldots \prec_{\mathcal W} w_m$ and if $d(w_i,w_j)=|j-i|$, then $w_{\min} = w_1$, $w_{\max} = w_m$ and $mid(w_i, w_j) = \{w_{\floor{(i+j)/2}}, w_{\ceil{(i+j)/2}}\}$ (where $\floor{x}$is the greatest integer smaller than $x$ and $\ceil{x}$ is the smallest integer greater than $x$).

The goal of the agent is then to make sure that most of the time, it will generate episodes that have the highest possible wealth levels. 
This can be implemented by optimizing a quantile criterion as explained in the next subsection.

\subsection{Quantile Criterion}\label{sec:quantile}

Intuitively, the $\tau$-quantile of a population of ordered elements, for $\tau\in[0,1]$, is the value $\quant$ such that $100 \cdot \tau\%$ of the population is equal or lower than $\quant$ and $100 \cdot (1-\tau)\%$ of the population is equal or greater than $\quant$. 
The $0.5$-quantile, also known as the median, can be seen as the ordinal counterpart of the mean.
More generally, quantiles define decision criteria that have the nice property of not requiring numeric valuations, but only an order.
They have been axiomatically studied as decision criteria by \citeauthor{Rostek10} \shortcite{Rostek10}.

We now give a formal definition of quantiles.
For this purpose we define the probability distribution $\dist^{\pi}$ over wealth levels induced by a policy $\pi$, 
\ie $\dist^{\pi}(w)$ is the probability of getting a wealth level $w \in \mathcal W_T$ when applying policy $\pi$ from the initial state.
The {\em cumulative distribution} induced by $\dist^\pi$ is then defined as $\cum^{\pi}$ where $\cum^{\pi}(w) = \sum_{w' \preceq_{\mathcal W} w} \dist^{\pi}(w')$ is the probability of getting a wealth level not preferred to $w$ when applying policy $\pi$.  
Similarly, the {\em decumulative distribution} induced by $\dist^\pi$ is defined as $\dec^{\pi}(w) = \sum_{w \preceq_{\mathcal W} w'} \dist^{\pi}(w')$ is the probability of getting a wealth level ``not lower'' than $w$. 

These two notions of cumulative and decumulative enable us to define two kinds of criteria. 
First, given a policy $\pi$, we define the lower $\tau$-quantile for $\tau \in(0,1]$ as:
\begin{equation}
\lquant_{\tau}^{\pi} = \min\{w \in \mathcal W_T \,|\, \cum^{\pi}(w) \geq \tau\}
\end{equation}
where the $\min$ operator is with respect to $\prec_{\mathcal W}$.

Then, given a policy $\pi$, we define the upper $\tau$-quantile for $\tau \in[0,1)$ as:
\begin{equation}
\uquant_{\tau}^{\pi} = \max\{w \in \mathcal W_T \,|\, \dec^{\pi}(w) \geq 1-\tau\}
\end{equation}
where the $\max$ operator is with respect to $\prec_{\mathcal W}$.

If $\tau = 0$ or $\tau = 1$ only one of $\lquant_{\tau}^{\pi}$ or $\uquant_{\tau}^{\pi}$ is defined and we define the $\tau$-quantile $\quant_{\tau}^{\pi}$ as that value. 
When both are defined, by construction, we have $\lquant_{\tau}^{\pi} \preceq_{\mathcal W} \uquant_{\tau}^{\pi}$.
If those two values are equal, $\quant_{\tau}^{\pi}$ is defined as equal to them. 
For instance, this is always the case in continuous settings for continuous distributions. 
However, in our discrete setting, it could happen that those values differ, as shown by Example~\ref{ex:1}. 
\begin{ex}\label{ex:1}
Consider an MDP where $\mathcal W_T = \{w_1 \prec_{\mathcal W} w_2 \prec_{\mathcal W} w_3\}$.  
Now assume a policy $\pi$ attains each wealth level with probabilities $0.5$, $0.2$ and $0.3$ respectively. 
Then it is easy to see that  $\lquant_{0.5}^{\pi}  = w_1$ whereas $\uquant_{0.5}^{\pi} = w_2$.
\end{ex}
When the lower and upper quantiles differ, one may define the quantile as a function of the lower and upper quantiles \cite{Weng12}.
For simplicity, we show in this paper how to optimize (approximately) the lower and the upper quantiles.

\begin{defi}
A policy $\pi^*$ is optimal for the lower (resp. upper) $\tau$-quantile criterion if:
\begin{align}
\lquant_\tau^{\pi^*} = \max_\pi \lquant_\tau^{\pi} \hspace{0.5cm} (\text{resp. }\uquant_\tau^{\pi^*} = \max_\pi \uquant_\tau^{\pi})
\end{align}
where the $\max$ operator is with respect to $\prec_{\mathcal W}$ and taken over all policies $\pi$ at horizon $T$.
\end{defi}

Even in a numerical context where a numerical reward function is given and the quality of an episode is defined as the cumulative of rewards received along the episode, this criterion is difficult to optimize, notably due to the two following related points:
\begin{itemize}
\item It is {\em non-linear} meaning for instance that the $\tau$-quantile $\quant_{\tau}^{\tilde{\pi}}$ of the mixed policy $\tilde{\pi}$ that generates an episode using policy $\pi$ with probability $p$ and $\pi'$ with probability $1-p$ is not given by $p \quant_{\tau}^{\pi} + (1-p)\quant_{\tau}^{\pi'}$.
\item It is {\em non-dynamically consistent} meaning that at time step $t$, an optimal policy computed in $s_0$ with horizon $T$ might not prescribe in state $s_t$ to follow a policy optimal in $s_t$ for horizon $T-t$. 
\end{itemize}
Three solutions are then possible \cite{McClennen90}: 
1) adopting a {\em consequentialist} approach, \ie at each time step $t$ we follow an optimal policy for the problem with horizon $T-t$ and initial state $s_t$ even if the resulting policy is not optimal at horizon $T$; 
2) adopting a {\em resolute choice} approach, \ie at time step $t=0$ we apply an optimal policy for the problem with horizon $T$ and initial state $s_0$ and do not deviate from it;
3) adopting a {\em sophisticated resolute choice} approach \cite{Jaffray98,FargierJeantetSpanjaard11}, \ie we apply a policy $\pi$ (chosen at the beginning) that trades off between how much $\pi$ is optimal for all horizons $T, T-1, \ldots, 1$.

With non-dynamically consistent preferences, it is debatable to adopt a consequentialist approach, as the sequence of decisions may lead to dominated results. 
In this paper, we adopt a resolute choice point of view.
We leave the third approach for future work.

As optimizing exactly a (lower or upper) quantile is hard, we aim at finding an approximate solution.
Let $\lquant^*_\tau$ and $\uquant^*_\tau$ be equal to the optimal lower and upper quantile respectively.

\begin{defi}
Let $\varepsilon>0$.
A policy $\pi^*_\varepsilon$ is said to be $\varepsilon$-optimal for the lower (resp. upper) $\tau$-quantile criterion if $d(\lquant_{\tau}^{\pi^*_\varepsilon} ,\lquant_{\tau}^{*}) \leq \varepsilon$ (resp. $d(\uquant_{\tau}^{\pi^*_\varepsilon} ,\uquant_{\tau}^{*}) \leq \varepsilon$).
\end{defi}

\section{Solving Algorithm}\label{sec:algo}

In this section, we present a technique for computing an $\varepsilon$-optimal policy for the quantile criterion.
Our approach amounts to solving a sequence of MDPs optimizing EU with target utility functions (see Section~\ref{sec:dp}).

\subsection{Binary Search}
In order to justify our algorithm, we introduce two lemmas that characterize the optimal lower and upper quantiles\footnote{For lack of space, all proofs are in the supplementary material.}:
\begin{lem}
The optimal lower $\tau$-quantile $\lquant^*$ satisfies:
\begin{align}
\lquant^* &= \min\{w : \cum^{*}(w) \geq \tau \} \label{eq:bslq1}\\
\cum^{*}(w) &= \min_\pi \cum^{\pi}(w) \quad \forall w \in \mathcal W \label{eq:pblq1}
\end{align}
\label{lem:3}
\end{lem}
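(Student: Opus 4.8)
The plan is to define $w^\dagger := \min\{w \in \mathcal W_T : \cum^*(w) \geq \tau\}$ (the right-hand side of \eqref{eq:bslq1}, once $\cum^*$ is fixed by \eqref{eq:pblq1}) and to prove $\lquant^* = w^\dagger$ by establishing the two inequalities $\lquant^* \preceq_{\mathcal W} w^\dagger$ and $\lquant^* \succeq_{\mathcal W} w^\dagger$, where $\lquant^* = \max_\pi \lquant_\tau^\pi$. The two elementary facts I would invoke throughout are: (i) by the very definition $\cum^*(w) = \min_\pi \cum^\pi(w)$, every policy satisfies $\cum^\pi(w) \geq \cum^*(w)$ for all $w$; and (ii) each $\cum^\pi$ is non-decreasing along $\prec_{\mathcal W}$.

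For the upper bound $\lquant^* \preceq_{\mathcal W} w^\dagger$, I would fix an arbitrary policy $\pi$ and evaluate at $w^\dagger$: since $\cum^\pi(w^\dagger) \geq \cum^*(w^\dagger) \geq \tau$, the level $w^\dagger$ is one of the candidates in the $\min$ defining $\lquant_\tau^\pi$, so $\lquant_\tau^\pi \preceq_{\mathcal W} w^\dagger$. Taking the maximum over $\pi$ gives $\lquant^* \preceq_{\mathcal W} w^\dagger$ directly.

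The delicate direction is achievability, $\lquant^* \succeq_{\mathcal W} w^\dagger$, and this is where the main obstacle lies: a priori the pointwise minimum $\cum^*(w)$ may be attained by a \emph{different} policy for each $w$, whereas $\lquant_\tau^\pi$ is governed by a single fixed policy. I would resolve this using monotonicity. If $w^\dagger = w_{\min}$ the claim is immediate since no $w \prec_{\mathcal W} w^\dagger$ exists; otherwise let $w^-$ be the greatest element of $\mathcal W_T$ strictly below $w^\dagger$, so that $\cum^*(w^-) < \tau$ by definition of $w^\dagger$. Choosing a policy $\pi^-$ that attains $\cum^{\pi^-}(w^-) = \cum^*(w^-)$ (this minimum is attained, e.g.\ by a deterministic Markovian policy computed by functional backward induction), monotonicity of $\cum^{\pi^-}$ yields $\cum^{\pi^-}(w) \leq \cum^{\pi^-}(w^-) < \tau$ for every $w \preceq_{\mathcal W} w^-$, i.e.\ for every $w \prec_{\mathcal W} w^\dagger$. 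Hence no level below $w^\dagger$ qualifies in the $\min$ defining $\lquant_\tau^{\pi^-}$, so $\lquant_\tau^{\pi^-} \succeq_{\mathcal W} w^\dagger$ and therefore $\lquant^* \succeq_{\mathcal W} \lquant_\tau^{\pi^-} \succeq_{\mathcal W} w^\dagger$. Combining the two inequalities establishes \eqref{eq:bslq1}. The crux is thus the observation that monotonicity of the cumulative distribution lets a single policy controlling $\cum(\cdot)$ at the \emph{one} threshold level $w^-$ stand in for the pointwise-optimal policies at all lower levels, which is precisely what rescues the argument from the pointwise nature of \eqref{eq:pblq1}.
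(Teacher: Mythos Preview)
Your argument is correct and follows essentially the same route as the paper's proof: both establish the two inequalities $\lquant^* \preceq_{\mathcal W} w^\dagger$ and $\lquant^* \succeq_{\mathcal W} w^\dagger$ using (i) the pointwise bound $\cum^\pi \ge \cum^*$ for the first direction and (ii) the existence of a single policy witnessing $\cum^\pi < \tau$ just below $w^\dagger$, together with monotonicity of $\cum^\pi$, for the second. The paper phrases both directions as contradictions and evaluates the witness policy directly at $w_1$ rather than at a predecessor, which spares the case split on $w^\dagger = w_{\min}$, but the substance is the same.
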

\noindent Note the last two equations can be equivalently rewritten: 
\begin{align}
\lquant^* &= \min\{w : \dec_\prec^*(w) \leq 1 - \tau \} \label{eq:bslq}\\
\dec_\prec^{*}(w) &= \max_\pi \dec_\prec^{\pi}(w) \quad \forall w \in \mathcal W \label{eq:pblq}
\end{align}
where $\dec_\prec^{\pi}(w) = 1 - \cum^\pi(w) = \sum_{w \prec_{\mathcal W} w'} \dist^\pi(w')$.
\begin{lem}
The optimal upper $\tau$-quantile $\uquant^*$ satisfies:
\begin{align}
\uquant^* &= \max\{w : \dec^{*}(w) \geq 1 - \tau \} \label{eq:bsuq}\\
\dec^{*}(w) &= \max_\pi \dec^{\pi}(w)  \quad \forall w \in \mathcal W \label{eq:pbuq}
\end{align}
\label{lem:4}
\end{lem}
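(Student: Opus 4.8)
The plan is to prove both displayed equations at once by establishing a single order identity: writing $\dec^*(w) := \max_\pi \dec^\pi(w)$ and $\hat w := \max\{w \in \mathcal W_T : \dec^*(w) \geq 1-\tau\}$, I will show $\uquant^* = \hat w$ by a double inequality with respect to $\preceq_{\mathcal W}$. The guiding idea is that an upper quantile is settled by a single threshold comparison on the decumulative, so that the level-by-level maxima defining $\dec^*$ can, at the decisive level, be realised by one policy.

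First I would verify that the operations are well defined over the finite, totally ordered set $\mathcal W_T$. The set $\{w : \dec^*(w) \geq 1-\tau\}$ is nonempty since $\dec^\pi(w_{\min}) = 1 \geq 1-\tau$ for every policy $\pi$ and every $\tau \in [0,1)$, and because $\mathcal W_T$ admits a greatest element the outer $\max$ defining $\hat w$ is attained.

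For the bound $\uquant^* \preceq_{\mathcal W} \hat w$, I would take an arbitrary policy $\pi$, put $w_\pi = \uquant_\tau^\pi$, and combine $\dec^\pi(w_\pi) \geq 1-\tau$ (the defining inequality of the upper quantile) with $\dec^*(w_\pi) \geq \dec^\pi(w_\pi)$ to conclude that $w_\pi$ lies in the set defining $\hat w$, hence $w_\pi \preceq_{\mathcal W} \hat w$; taking the max over $\pi$ yields the bound. For the reverse inequality I would use the decoupling observation: since $\dec^*(\hat w) \geq 1-\tau$ is itself a maximum over policies, some single policy $\pi'$ achieves $\dec^{\pi'}(\hat w) \geq 1-\tau$, so that $\hat w$ belongs to $\{w : \dec^{\pi'}(w) \geq 1-\tau\}$ and therefore $\uquant_\tau^{\pi'} \succeq_{\mathcal W} \hat w$, giving $\uquant^* \succeq_{\mathcal W} \uquant_\tau^{\pi'} \succeq_{\mathcal W} \hat w$. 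The two inequalities give $\uquant^* = \hat w$, which is exactly the conjunction of~\eqref{eq:bsuq} and~\eqref{eq:pbuq}; as a bonus the witness $\pi'$ shows the optimal upper quantile is attained by one policy, not merely by patching together different optimizers across thresholds.

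The step I expect to demand the most care is the attainability underlying $\dec^*(w) = \max_\pi \dec^\pi(w)$, namely that for each fixed $w$ the probability of ending with a wealth level not lower than $w$ is achieved by an actual policy rather than only approached; this matters precisely in the reverse inequality, where a non-attained supremum equal to $1-\tau$ would break the argument. This is where the expected-utility reformulation of Section~\ref{sec:dp} is invoked: maximizing $\dec^\pi(w)$ is maximizing the expectation of the $\{0,1\}$-valued target utility scoring an episode $1$ iff its wealth is not lower than $w$, and such a target problem admits a deterministic Markovian optimal policy at horizon $T$ by backward induction over the finite wealth-augmented state space. Everything else is bookkeeping over $\mathcal W_T$, and the proof of Lemma~\ref{lem:3} is the order-dual of this one, with $\cum$, $\min$, and $\leq$ replacing $\dec$, $\max$, and $\geq$.
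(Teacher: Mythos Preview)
Your proposal is correct and follows essentially the same double-inequality argument as the paper: the paper too sets $w_1=\uquant^*$ and $w_2=\max\{w:\dec^*(w)\ge 1-\tau\}$ and proves $w_1\preceq_{\mathcal W} w_2$ and $w_2\preceq_{\mathcal W} w_1$ by the very two steps you describe. The only substantive addition on your side is the explicit discussion of why $\max_\pi \dec^\pi(w)$ is attained (via the EU/target-utility reformulation), a point the paper's proof uses silently when it writes ``there exists a policy $\pi$ such that $\dec^\pi(w_2)\ge 1-\tau$''.
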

Given Lemmas~\ref{lem:3} and \ref{lem:4} the problem now reduces to finding the right value of $w \in \mathcal{W}$ that solves the problems defined by Equation~\ref{eq:bslq} or \ref{eq:bsuq}. 
Our solving method is based on binary search (see Algorithm~\ref{alg:lqo}) and on the function $solve(\mathcal M, w)$ that returns a pair $(\pi, p)$, the solution of the problems defined by Equation~\ref{eq:pblq} or \ref{eq:pbuq} for a fixed $w$, \ie the $\max$ is equal to $p$ and attained at $\pi$. 
Note that while for the upper quantile criterion, $solve(\mathcal M, \uquant_\tau^{\pi^*})$ returns an optimal policy, 
for the lower quantile, $solve(\mathcal M, \lquant_\tau^{\pi^*})$ may not  if $\lquant_\tau^{\pi^*} \!\succ_\mathcal{W}\! \min(\mathcal{W}_T)$. 
However, $solve(\mathcal M, prec(\lquant_\tau^{\pi^*}))$ returns an optimal policy where $prec(w) $ is the most preferred element such that $prec(w)\!\prec_\mathcal{W}\! w$ (see supplementary material).

In the next subsection, we show how  function $solve$ can be computed for the lower and upper quantile.

Note that when $\mathcal{W}_T$ is defined on the real line, Algorithm \ref{alg:lqo} needs only  
\[\ceil*{\log_2 d(w_{\max}, w_{\min})/\varepsilon}\] iterations to terminate by using  $[w_{\min},w_{\max}]$ as $\mathcal{W}_T$.
In the case where $\mathcal W_T$ is finite, binary search can of course determine the optimal policy with $\varepsilon = 1$ and needs $\ceil{\log_2(|\mathcal W_T|)}$ iterations.

\begin{algorithm}[t]
\small
\DontPrintSemicolon
\KwData{MDP $\mathcal{M}$, $\tau$, $\varepsilon$}
\KwResult{an $\varepsilon$-optimal policy $\pi$}
$\overline w\gets w_{\max}$; 
$\underline w\gets w_{\min}$; 
$w\leftarrow mid(\underline w, \overline w)$\\

\While{$d(\overline{w}, \underline{w}) > \varepsilon$}{
$(\pi, p) =  solve(\mathcal{M}, w)$;\\
\eIf{$p > 1 - \tau$ (resp. $p \geq 1 - \tau$)}{ 
$\underline w\gets w$;
$w\leftarrow \max(mid(\underline w, \overline w))$;
$\pi^* \leftarrow \pi$;
}
{
$\overline w \gets w$;
$w\gets \min(mid(\underline w, \overline w))$;
}
}
\Return $\pi^*$
\caption{Binary Search for the Lower Quantile (resp. Upper Quantile)}
\label{alg:lqo}
\end{algorithm}

The next proposition asserts that Algorithm~\ref{alg:lqo} is correct:

\begin{pro}
Algorithm \ref{alg:lqo} returns an $\varepsilon$-optimal policy for the lower (or upper) quantile criterion.
\end{pro}

\subsection{Dynamic Programming}\label{sec:dp}

For $\triangleleft \in \{\prec_\mathcal W, \preceq_\mathcal W\}$, we denote by $U_{w}^\triangleleft: \mathcal W \to \mathbb R$ the function, called {\em target utility function}, defined as follows:
\begin{equation}\label{eq:u}
U_{w}^\triangleleft(x) = 1 \text{ if } w \triangleleft x \text{ and } 0 \text{ else.}
\end{equation}

When optimizing the lower (resp. upper) quantile, function $solve(\mathcal M, w)$ can be computed by solving MDP $\mathcal M$ using EU as a decision criterion with $U_w^{\prec_\mathcal W}$ (resp. $U_w^{\preceq_\mathcal W}$) as a utility function. 
Indeed, we have:
\begin{align*}
\mathbb E_\pi[ U^\triangleleft_w \big(w(H_T)\big) ] &= \mathbb P[ w \triangleleft w(H_T) \,|\,\pi ] 
\end{align*}
where $H_T$ is a random variable representing a $T$-history and $\mathbb P[ w \triangleleft w(H_T) \,|\,\pi]$ denotes the probability that $\pi$ generates a history whose wealth is strictly better (resp. at least better) than $w$ when $\triangleleft = \prec_\mathcal W$ (resp. $\triangleleft = \preceq_\mathcal W$).

Following \cite{LiuKoenig06}, this problem can be solved with a functional backward induction (Algorithm \ref{alg:biuf}). 
For each state $s$, it maintains a function $V_{t}(s,.)$ which associates to each possible wealth level $w$ the expected utility obtained by applying an optimal policy in state $s$ for the remaining $T-t$ time steps with $w$ as initial wealth level. 
At each time step $(t=T,\ldots,1)$ this function is updated similarly as in backward induction except that operations are not applied to scalars but to functions.
The $\max$ and $\times$ operations are extended over functions as pointwise operations.
As utility functions defined by Equation~\ref{eq:u} are piecewise-linear, $V_t(s,.)$ is also piecewise-linear because all the operations in Line~\ref{algline:update} of Algorithm~\ref{alg:biuf} preserve this property.

\begin{algorithm}[t]
\small
\DontPrintSemicolon
\KwData{MDP $\mathcal{M}$, wealth $w$}
\KwResult{an optimal policy $\pi$}
\For{all $s\in S$}{$V_{T+1}(s,.)\leftarrow U^\triangleleft_w(.)$}
\For{$t=T$ to $1$}{
	\For{all $s\in S$}{
$V_t(s, \cdot )\leftarrow \displaystyle\max_a \sum_{s'\in S}\mathcal P(s,a,s')V_{t+1}(s', \cdot \circ r(s,a))$ \label{algline:update}
				}
			}  
\Return~$(\pi_{V_1}, V_1(s_0, w_0))$ $\backslash\backslash$ $\pi_{V_1}$= policy corresponding to $V_1$
\caption{FunctionalBackwardInduction}
\label{alg:biuf}
\end{algorithm}

Policies returned by Algorithm~\ref{alg:biuf} have a special structure.
They are deterministic and wealth-Markovian:
\begin{defi}
A policy is said to be {\em wealth-Markovian} if its decision rules are functions of both the current state and the current wealth level.
\end{defi}

Besides, this is also the case for policies optimal with respect to the quantile criterion.
\begin{pro}\label{pro:opt}
Optimal policies for the lower or upper quantile at horizon $T$ can be found as deterministic wealth-Markovian policies.
\end{pro}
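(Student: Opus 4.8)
The plan is to reduce the search for a quantile-optimal policy to the expected-utility subproblems already solved by $solve(\mathcal M, \cdot)$, and then invoke the structural fact established just above, namely that Algorithm~\ref{alg:biuf} (which computes $solve$) returns a policy that is deterministic and wealth-Markovian and attains the maximum it is asked to compute. Concretely, $solve(\mathcal M, w)$ with $\triangleleft = \preceq_\mathcal W$ returns a deterministic wealth-Markovian $\pi$ realising $\max_\pi \dec^\pi(w)$, and with $\triangleleft = \prec_\mathcal W$ one realising $\max_\pi \dec_\prec^\pi(w)$. Hence it suffices, for each criterion, to exhibit a single threshold $w$ at which the policy returned by $solve$ already achieves the optimal quantile; that policy is then deterministic wealth-Markovian by construction, which is exactly what the proposition asserts.

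For the upper quantile I would take $w = \uquant^*$ and set $\pi^* = solve(\mathcal M, \uquant^*)$ with $\triangleleft = \preceq_\mathcal W$. By Lemma~\ref{lem:4}, Equation~\ref{eq:bsuq} gives $\dec^*(\uquant^*) \geq 1-\tau$, and Equation~\ref{eq:pbuq} together with the behaviour of $solve$ gives $\dec^{\pi^*}(\uquant^*) = \max_\pi \dec^\pi(\uquant^*) = \dec^*(\uquant^*) \geq 1-\tau$. Thus $\uquant^*$ lies in $\{w : \dec^{\pi^*}(w) \geq 1-\tau\}$, so $\uquant^{\pi^*} \succeq_\mathcal W \uquant^*$. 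Since by optimality $\uquant^{\pi^*} \preceq_\mathcal W \max_\pi \uquant^\pi = \uquant^*$, we conclude $\uquant^{\pi^*} = \uquant^*$, i.e. $\pi^*$ is optimal and of the desired form.

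For the lower quantile the threshold must be shifted by one element, which is the delicate point flagged before Algorithm~\ref{alg:lqo}. If $\lquant^* = w_{\min}$ the claim is immediate, as every policy satisfies $\lquant^\pi \succeq_\mathcal W w_{\min}$, so any deterministic wealth-Markovian policy is optimal. Otherwise I would set $\pi^* = solve(\mathcal M, prec(\lquant^*))$ with $\triangleleft = \prec_\mathcal W$. Since $\lquant^* = \min\{w : \dec_\prec^*(w) \leq 1-\tau\}$ by Lemma~\ref{lem:3} and $prec(\lquant^*) \prec_\mathcal W \lquant^*$, the predecessor fails the defining inequality, so $\dec_\prec^*(prec(\lquant^*)) > 1-\tau$, whence $\dec_\prec^{\pi^*}(prec(\lquant^*)) = \max_\pi \dec_\prec^\pi(prec(\lquant^*)) > 1-\tau$. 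Because $w \mapsto \dec_\prec^{\pi^*}(w)$ is nonincreasing for $\preceq_\mathcal W$, this strict inequality propagates to every $w \preceq_\mathcal W prec(\lquant^*)$, so none of these values lies in $\{w : \dec_\prec^{\pi^*}(w) \leq 1-\tau\}$. Therefore $\lquant^{\pi^*} = \min\{w : \dec_\prec^{\pi^*}(w) \leq 1-\tau\} \succeq_\mathcal W \lquant^*$, and combined with $\lquant^{\pi^*} \preceq_\mathcal W \max_\pi \lquant^\pi = \lquant^*$ this gives $\lquant^{\pi^*} = \lquant^*$.

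The ingredients that are routine are the two optimality inequalities and the monotonicity of a single policy's decumulative. The main obstacle I expect is entirely in the lower-quantile case: one must justify evaluating $solve$ at $prec(\lquant^*)$ rather than at $\lquant^*$, verify that the relevant inequality there is strict ($> 1-\tau$, not merely $\geq$), and argue that this strictness carries down to all smaller wealth levels so that the induced lower quantile cannot fall below $\lquant^*$. The separate handling of the boundary case $\lquant^* = w_{\min}$, where $prec$ is undefined, should also be stated explicitly rather than left implicit.
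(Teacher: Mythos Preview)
Your argument is correct and rests on the same mechanism as the paper's proof: both exploit that $solve(\mathcal M,\cdot)$, implemented by functional backward induction, always returns a deterministic wealth-Markovian policy attaining $\max_\pi \dec^\pi(w)$ (or $\max_\pi \dec_\prec^\pi(w)$), and both need the shift to $prec(\lquant^*)$ in the lower-quantile case. The only difference is organisational: the paper shows that for \emph{every} policy $\pi$ the call $solve(\mathcal M,\uquant_\tau^\pi)$ (resp.\ $solve(\mathcal M,prec(\lquant_\tau^\pi))$) yields a deterministic wealth-Markovian policy at least as good, and then picks the best among the finitely many such policies, whereas you apply $solve$ once at the optimal threshold $\uquant^*$ (resp.\ $prec(\lquant^*)$) and verify directly that the output is optimal. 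Your route is a touch more direct but tacitly uses that $\max_\pi \uquant_\tau^\pi$ and $\max_\pi \lquant_\tau^\pi$ are attained, which is immediate here since all quantile values lie in the finite set $\mathcal W_T$; the paper's domination-plus-finiteness phrasing establishes existence of the optimum as a by-product.
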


\section{Infinite Horizon}\label{sec:infinite}
We present in this section some results regarding the infinite horizon case.
Similarly to the finite horizon setting, the situation for the quantile criterion is not as simple as for the standard case. 
Indeed, in the infinite horizon case, it may happen that there is no {\em stationary} deterministic Markovian policy that is optimal (w.r.t. the quantile criterion) among all policies, contrary to standard MDPs.
\begin{ex}
Consider an MDP with two states $s_1$ and $s_2$ and two actions $a_1$ and $a_2$. In $s_1$, the transition probabilities are $\mathcal P(s_1, a_1, s_1) = 0.1$, $\mathcal P(s_1, a_1, s_2) = 0.9$ and $\mathcal P(s_1, a_2, s_2) = 1$.
To make this example shorter, we assume that rewards depend on next states.
The rewards are $r(s_1, a_1, s_1) = 1$, $r(s_1, a_1, s_2) = -1$ and $r(s_1, a_2, s_2) = 1$.
In $s_2$, the transition probabilities are $\mathcal P(s_2, a_1, s_2) = \mathcal P(s_2, a_2, s_2) = 1$.
Rewards are null for both actions in $s_2$. 
Among all decision rules, there are only two distinct rules: $\delta_1(s_1) = a_1$ and $\delta_2(s_1) = a_2$.
To ensure that the values of histories are well-defined, we assume that they are defined as discounted sum of rewards with a discount factor $\gamma=0.9$.
One can then check that the $0.95$-quantile of the stationary policy using $\delta_1$ is $0.1$, that of the stationary policy using $\delta_2$ is $1$.
Finally, the $0.95$-quantile of the policy applying first $\delta_1$ and then $\delta_2$ is $1.9$.
Therefore, no stationary deterministic Markovian policy is optimal for the quantile criterion.
\end{ex}

\begin{figure}[tb!]
\begin{center}
\includegraphics[width=0.8\textwidth]{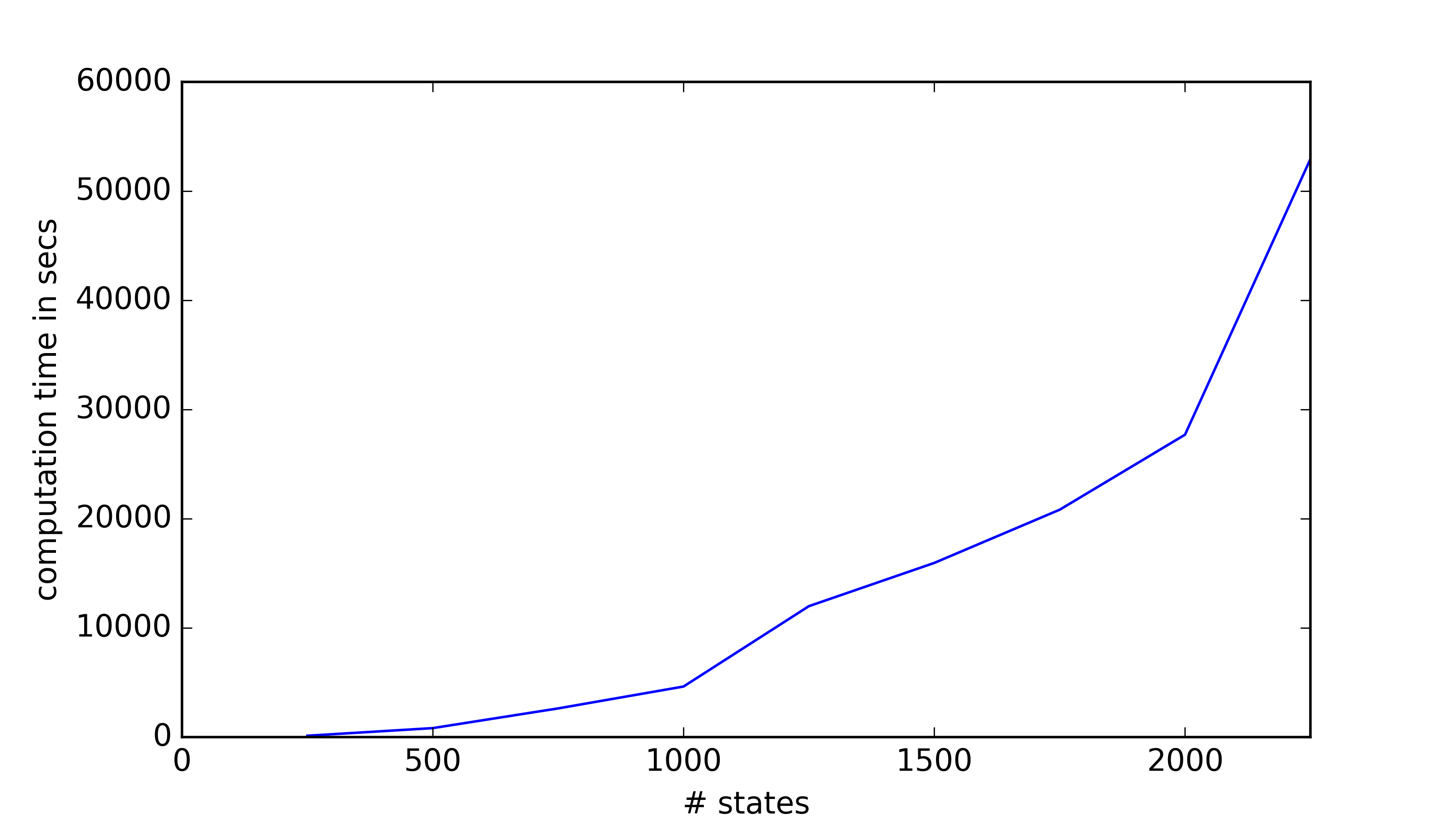}
\caption{Computation times vs state sizes for Functional Backward Induction.}
\label{1 BI: Running Time Cost}
\end{center}
\end{figure}

However, considering wealth-Markovian policies, some results can be given when rewards are numeric and wealth levels are undiscounted:

\begin{pro}
Optimal policies for the lower or upper quantile can be found as stationary deterministic wealth-Markovian policies in the two following cases:
\begin{description}
\item[(i)] 
$\forall (s, a) \in \mathcal{S}\times\mathcal{A}, r(s,a) \le 0$.
\item[(ii)] 
$\forall (s, a) \in \mathcal{S}\times\mathcal{A}, r(s,a) \ge 0$. Furthermore, we require the existence of a finite upper bound on the optimal lower and upper quantiles.
\end{description}

\end{pro}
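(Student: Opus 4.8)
The plan is to recast the infinite-horizon problem in a wealth-augmented MDP and, after the reduction of Lemmas~\ref{lem:3} and~\ref{lem:4}, to recognize the quantile objective as a \emph{safety} (case~\textbf{(i)}) or \emph{reachability} (case~\textbf{(ii)}) objective whose monotone wealth structure forces stationarity. First I would form the augmented MDP $\mathcal M'$ whose states are pairs $(s,u)$, with $u$ the wealth accumulated so far, initial state $(s_0,w_0)$, and transitions $(s,u)\xrightarrow{a}(s',u\circ r(s,a))$ of probability $\mathcal P(s,a,s')$. By construction a deterministic wealth-Markovian policy on $\mathcal M$ is exactly a deterministic Markovian policy on $\mathcal M'$, and stationarity coincides, so it suffices to exhibit a stationary deterministic Markovian optimal policy on $\mathcal M'$; this augmented view is natural since the finite-horizon optimizers are already deterministic wealth-Markovian (Proposition~\ref{pro:opt}). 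By Lemmas~\ref{lem:3} and~\ref{lem:4} together with the expected-utility reformulation of Section~\ref{sec:dp}, optimizing the quantile reduces to solving, at the relevant threshold $w$, the problem $\max_\pi \mathbb P[\, w \triangleleft w(H_\infty)\,]$ with $\triangleleft\in\{\prec_{\mathcal W},\preceq_{\mathcal W}\}$, where $w(H_\infty)=\sum_{i\ge 0} r(s_i,a_i)$ is the total undiscounted wealth.

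The key step is to use the sign of the rewards to turn this tail event into a hitting event on $\mathcal M'$. In case~\textbf{(i)} every reward is non-positive, so the partial sums $u_t$ are non-increasing and $\{w\preceq_{\mathcal W} w(H_\infty)\}=\bigcap_t\{w\preceq_{\mathcal W} u_t\}$ is the safety event ``the wealth never drops below $w$'', i.e.\ the trajectory never enters the absorbing failure region $\{(s,u):u\prec_{\mathcal W} w\}$. In case~\textbf{(ii)} every reward is non-negative, so the $u_t$ are non-decreasing and $\{w\preceq_{\mathcal W} w(H_\infty)\}=\bigcup_t\{w\preceq_{\mathcal W} u_t\}$ is the reachability event ``the wealth eventually enters the absorbing target region $\{(s,u):w\preceq_{\mathcal W} u\}$''; since a finite MDP yields finitely many reward increments, a positive increment is bounded away from $0$, so the total is attained in finite time and the two events genuinely coincide. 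The strict variant $\prec_{\mathcal W}$ is handled identically with the threshold shifted by one wealth level, using $prec(w)$. Thus on $\mathcal M'$ the objective is, in both cases, a maximum-probability reachability (resp. safety) value.

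I would then invoke the classical fact that maximum-probability reachability, and dually maximum-probability safety (which is one minus a minimum-probability reachability), admits a stationary deterministic optimal policy, obtained greedily from the optimal value function $V(s,u)=\max_\pi\mathbb P[\text{hit target}\mid(s,u)]$ satisfying the Bellman optimality equation $V(s,u)=\max_a\sum_{s'}\mathcal P(s,a,s')\,V\!\big(s',u\circ r(s,a)\big)$ on the non-absorbing states. Pulling the greedy stationary policy back to $\mathcal M$ gives the desired stationary deterministic wealth-Markovian optimal policy. The finite-upper-bound hypothesis of case~\textbf{(ii)} guarantees that the threshold $w$ is a finite wealth level, so only the bounded window of wealth below $w$ is relevant and the reachability value is genuinely attained rather than escaping to arbitrarily large wealth.

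The main obstacle is that $\mathcal M'$ has an infinite state space: with finitely many but incommensurable reward values the reachable wealth levels can be infinite, even dense, within a bounded interval, so the finite-MDP existence theorem does not apply verbatim. The delicate point is to establish a stationary deterministic optimal policy for these objectives on the infinite chain while controlling the two standard pathologies, namely zero-reward cycles that keep $u$ fixed while $s$ loops, and escape of $u$ to infinity. Here the asymmetry between the two cases is essential: in case~\textbf{(i)} looping on a zero-reward cycle is itself safe, so such cycles are harmless and no finiteness of the optimal quantile is needed; in case~\textbf{(ii)} a greedy reachability-maximizer could loop forever without ever hitting the target, and the wealth could run away, which is exactly why the finite-bound assumption is imposed, to confine the analysis to a transient, well-structured portion of $\mathcal M'$ on which the greedy stationary policy provably attains $V$.
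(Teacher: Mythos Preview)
Your reduction to a safety/reachability objective on the wealth-augmented MDP is exactly the right framing, and it is essentially what the paper does. The gap is in your last paragraph: you flag the infinite state space as ``the main obstacle'' and leave it unresolved, even asserting that ``with finitely many but incommensurable reward values the reachable wealth levels can be infinite, even dense, within a bounded interval.'' In cases~(i) and~(ii) this is false, and seeing why is precisely the step that finishes the proof.

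Take case~(i). All rewards lie in the finite set $\{r(s,a):(s,a)\in\mathcal S\times\mathcal A\}\subset(-\infty,0]$; write the strictly negative ones as $\rho_1,\ldots,\rho_k$. Any reachable wealth level is $\sum_i n_i\rho_i$ with $n_i\in\mathbb N$. Fix a finite lower bound $q\le 0$ on the optimal quantile (if none exists every policy is optimal). Then $\sum_i n_i\rho_i\ge q$ forces $\sum_i n_i|\rho_i|\le|q|$, hence $n_i\le |q|/|\rho_i|$, so only finitely many tuples $(n_1,\ldots,n_k)$, hence finitely many wealth levels in $[q,0]$, are reachable. Incommensurability is irrelevant because the one-sided sign lets you bound each multiplicity separately. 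Case~(ii) is symmetric, using the assumed finite upper bound on the optimal quantile to cap the wealth window $[0,w]$.

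This is exactly what the paper exploits: it collapses all augmented states with wealth below $q$ (resp.\ above the bound) into a single absorbing sink, obtaining a \emph{finite} augmented MDP, and then applies the standard existence theorem for stationary deterministic optimal policies under the expected-total-reward criterion (Puterman, Theorem~7.1.9), with the reward $r_x((s,w),a)=-1$ precisely when the transition crosses the threshold $x$. Your appeal to ``the classical fact'' about reachability on an infinite chain, together with the informal discussion of zero-reward cycles, is not needed once you observe finiteness; conversely, without that observation your argument does not close.
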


Then, a solving algorithm can be obtained from Algorithm \ref{alg:lqo} by replacing functional backward induction (Alg.~\ref{alg:biuf}) by functional value iteration \cite{LiuKoenig06} in the binary search. This amounts to do the for loop over $t$ (line 4) until convergence of $V_{t}$, \ie $\| V_{t} - V_{t-1}\|_\infty \le \epsilon'$. Binary search will then return an $(\epsilon + \epsilon')$-optimal for the $\tau$-quantile. However, note that in the first (resp. second) case, a lower (resp. upper) bound on the optimal lower or upper quantile is required to do the binary search.

\section{Experimental Results}\label{sec:expe}
We experimentally evaluated our approach on a server equipped with four Intel(R) Xeon(R) CPU E5-2640 v3 @ 2.60GHz and 64Gb of RAM.
The algorithms were implemented in Matlab and ran only on one core.
We expect the running times to be improved with a more efficient programming language and by exploiting a multicore architecture.

\begin{figure}[tb!]
\begin{center}
\includegraphics[width=0.8\textwidth]{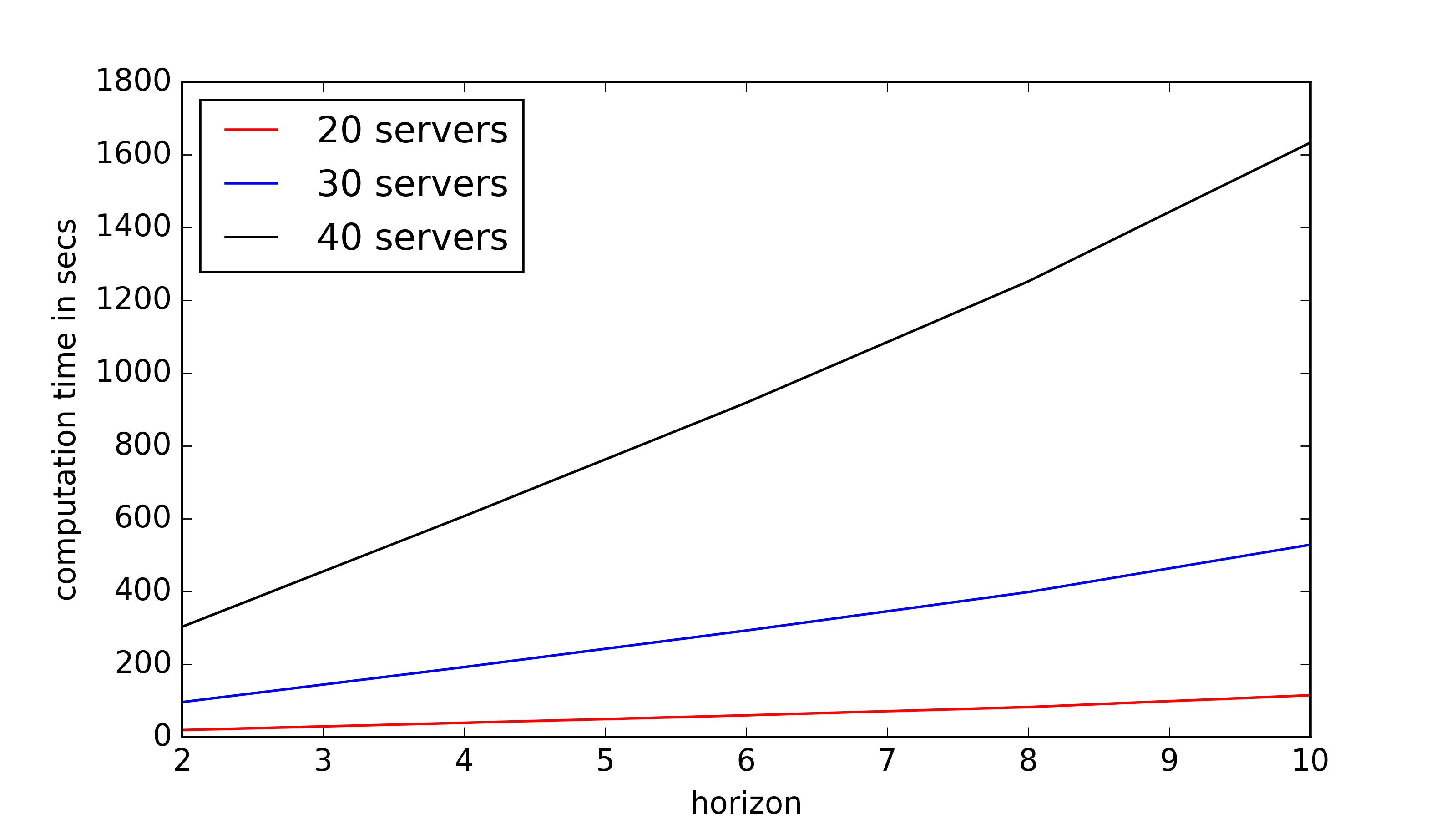}
\caption{Computation times vs horizon for Functional Backward Induction.}
\label{1.5 BI: Running Time Cost}
\end{center}
\end{figure}

We designed three sets of experiments.
Although our approach could be used in a preference-based setting, we performed the experiments with numerical rewards for simplicity.
The first shows the running time of functional backward induction for different varying state sizes on random MDPs. 
The second set of experiments shows the running time of functional backward induction for different horizons on a data center control problem with various number of servers. 
Finally, the third compares the cumulative distributions of a policy optimal for the quantile criterion and a policy optimal for the standard criterion on a fixed MDP.

The first set of experiments was conducted on Garnets \cite{McKinnonThomas95}, which  designate random MDPs with a constrained branching factor.
A Garnet $G(n_S, n_A, b)$ is characterized by $n_S$ a number of states, $n_A$ a number of actions and $b$ the number of successor states for every state and action.
For our experiments, $n_S\in\{ 250, 500, 750, 1000, 1250, 1500, 1750, 2000, 2250 \}$ and we set $n_A = 5$ and $b = \ceil{\log_2 n_S}$.
Rewards are randomly chosen in $[0, 1]$ and the values of histories are simply cumulated rewards.
The horizon of the problem was set to $5$. 
The results are presented in Figure \ref{1 BI: Running Time Cost} where the x-axis represents the state size and the y-axis the computation time.  
Each point is the average over 10 runs.
Naturally, computation times increases with state sizes.
In this setting, binary search would call functional backward induction $\ceil*{\log_2(1/\varepsilon)} = 10$ times if $\varepsilon=10^{-3}$.

The second set of experiments was performed on a more realistic domain, which is a data center control problem inspired by the model proposed by \citeauthor{YinSinopoli14} \shortcite{YinSinopoli14}.
In this problem, one needs to decide every time step how many servers to switch on or off, while maximizing Quality-of-Service and minimizing power consumption.
In the model proposed by \citeauthor{YinSinopoli14}, the two objectives are simply combined into one cost, which defines our reward signal.
The state is defined as the number of servers that are currently on and the number of jobs that needs to be processed during a time step.
The action represents the number of servers that will be on at the next time step.
We assume for simplicity that the maximum number of jobs that can arrive at one timestep is three times the total number of servers.
For instance, in a problem with $n=30$ servers, the total number of states is $30\times3\times30 = 2700$.
Besides, the distribution of the next number of jobs is modeled as a Poisson distribution whose parameter can be $\ceil{n/2}$, $\ceil{3n/2}$ or $\ceil{5n/2}$ (to model different regimes) depending on the current number of jobs.
Figure~\ref{1.5 BI: Running Time Cost} shows the computation times of functional backward induction for $n\in\{20, 30, 40\}$ and different horizons.
We can see that for more structured problems, the computation time is much more reasonable than on random MDPs.

In the last set of experiments, to give an intuition of the kind of policy obtained when optimizing a quantile, we compare the cumulative distribution of a policy optimal for the quantile criterion and that of a policy optimal for the standard criterion.
This experiment is performed on an instance of Garnet $G(100, 5,  \ceil{\log_2 100})$ whose rewards are slightly modified to make the distribution of the optimal policy skewed, as it is often the case in some real applications \cite{BenoitVandenPoel09}.
The horizon is set to $5$ and we optimize the $0.1$-quantile with $\varepsilon=0.001$ in binary search.
The two cumulative distributions are plotted in Figure \ref{2 Comparison: quantile and standard}.
We can observe that although the optimal policy for the standard criterion maximizes the expectation, it may be a risky policy to apply as the probability of obtaining a high reward is low.
On the contrary, the optimal policy for the $\tau$-quantile criterion will guarantee a reward as high as possible with probability at least $1-\tau$.

\begin{figure}[!tb]
\begin{center}
\includegraphics[width=0.8\textwidth]{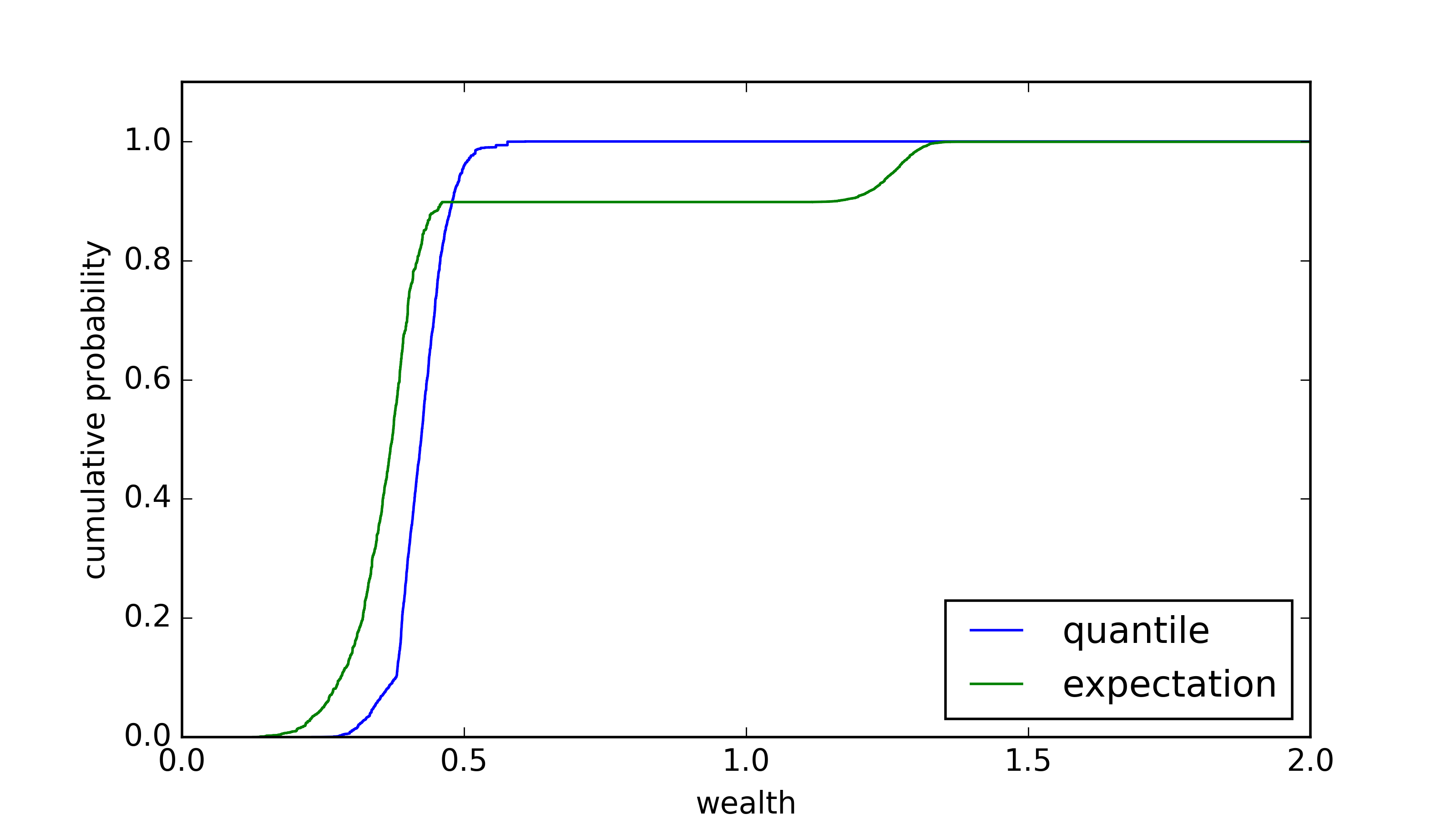}
\caption{Comparison of cumulative distributions under the quantile criterion and standard criterion}
\label{2 Comparison: quantile and standard}
\end{center}
\end{figure}

\section{Related Work}\label{sec:related}
Much work in the MDP literature \cite{BBMSWeng10} considered decision criteria different to the standard ones (i.e., expected discounted sum of rewards, expected total rewards or expected average rewards).
For instance, in the operations research community, 
\citeauthor{White87} \shortcite{White87} considered different cases where preferences over policies only depend on sums of rewards: Expected Utility (EU), probabilistic constraints and mean-variance formulations.
In this context, he showed the sufficiency of working in a state space augmented with the sum of rewards obtained so far.
Recently, \cite{prashanth2013actor} and \cite{mannor2011mean} provided algorithms for this mean-variance formulation. 
\citeauthor{FilarKallenbergLee89} \shortcite{FilarKallenbergLee89} investigated decision criteria that are variance-penalized versions of the standard ones. 
They formulated the obtained optimization problem as a non-linear program. 
Several researchers \cite{White93,BouakizKebir95,Yu98,WuLin99,OhtsuboToyonaga02,hou2014revisiting,fan2005arriving} worked on the problem of optimizing the probability that the total (discounted) reward exceeds a given threshold.

Additionally, in the artificial intelligence community, \cite{LiuKoenig05,LiuKoenig06,ermon2012probabilistic} also investigated the use of EU as a decision criterion in MDPs. In the continuation of this work, \citeauthor{GilbertSpanjaardViappianiWeng15} \shortcite{GilbertSpanjaardViappianiWeng15} investigated the use of Skew-Symmetric Bilinear (SSB) utility \cite{Fishburn81} functions --- a generalization of EU with stronger descriptive abilities --- as decision criteria in finite-horizon MDPs. 
Interestingly, SSB also encompasses probabilistic dominance, a decision criterion that can be employed in preference-based sequential decision-making \cite{BusaFeketeSzorenyiWengChengHullermeier14}.

Recent work in MDP and reinforcement learning considered  conditional Value-at-risk (CVaR), a criterion related to quantile, as a risk measure.
\citeauthor{BauerleOtt11} \shortcite{BauerleOtt11} proved the existence of deterministic wealth-Markovian policies optimal with respect to CVaR. 
\citeauthor{ChowGhavamzadeh14} \shortcite{ChowGhavamzadeh14} proposed gradient-based algorithms for CVaR optimization.
In contrast, \citeauthor{BorkarJain14} \shortcite{BorkarJain14} used CVaR in inequality constraints instead of as objective function.

Closer to our work, several quantile-based decision models have been investigated in different contexts. 
In uncertain MDPs where the parameters of the transition and reward functions are imprecisely known, \citeauthor{DelageMannor07} \shortcite{DelageMannor07} presented and investigated a quantile-like criterion to capture the trade-off between optimistic and pessimistic viewpoints  on an uncertain MDP. 
The quantile criterion they use is different to ours as it takes into account the uncertainty present in the parameters of the MDP.
\citeauthor{FilarKrassRoss95} \shortcite{FilarKrassRoss95} proposed an algorithm for optimizing the quantile criterion when histories are valued by average rewards.
In that setting, they showed that an optimal stationary deterministic Markovian policy exists.
In MDPs with ordinal rewards \cite{Weng11,Weng12,Filar83}, quantile-based decision models were proposed to compute policies that maximize a quantile using linear programming.
While quantiles in those works are defined on distributions over ordinal rewards, we defined them as distributions over histories. 

More recently, in the machine learning community, quan\-tile-based criteria have been proposed in the multi-armed bandit (MAB) setting, a special case of reinforcement learning.
\citeauthor{YuNikolova13} \shortcite{YuNikolova13} proposed an algorithm in the pure exploration setting for different risk measures, including Value-at-Risk. 
\citeauthor{CarpentierValko14} \shortcite{CarpentierValko14} studied the problem of identifying arms with extreme payoffs, a particular case of quantiles.
Finally, \citeauthor{SzorenyiBusaFeketeWengHullermeier15} \shortcite{SzorenyiBusaFeketeWengHullermeier15} investigated MAB problems where a quantile is optimized instead of the mean.

\section{Conclusion}\label{sec:conclusion}
In this paper we have developed a framework to solve sequential decision problems in a very general setting according to a quantile criterion. Modeling those problems as MDPs we developed an offline algorithm in order to compute an $\epsilon$-optimal policy and investigated the properties of the optimal policies in the finite and infinite horizon cases. 
Lastly, we provided experimental results, testing those two algorithms in a variety of settings.

As future work, we plan to investigate how this work can be extended to the case of reinforcement learning, a framework more involved than the one of MDPs where the dynamics of the problems are unknown and must be learned.

\fontsize{9.5pt}{10.pt} \selectfont
\bibliography{biblio160226}
\bibliographystyle{named}
\normalsize

\newpage
\onecolumn
\section{Supplementary material of ``Optimizing Quantiles in Preference-based Markov Decision Processes" }

We provide in this section the proofs of our lemmas and propositions.

\setcounter{lem}{0}
\begin{lem}
The optimal lower $\tau$-quantile $\lquant^*$ satisfies:
\begin{align*}
\lquant^* &= \min\{w : \cum^{*}(w) \geq \tau \} \\
\cum^{*}(w) &= \min_\pi \cum^{\pi}(w) \quad \forall w \in \mathcal W 
\end{align*}
\end{lem}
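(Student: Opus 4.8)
The two displayed equations should be read as a single claim: the second line \emph{defines} the pointwise-minimal cumulative $\cum^*$, and the content to establish is that the optimal lower quantile $\lquant^* = \max_\pi \lquant_\tau^\pi$ coincides with the threshold crossing $w^\dagger := \min\{w : \cum^*(w) \geq \tau\}$. The plan is to prove this equality by two opposite inequalities with respect to $\prec_{\mathcal W}$. First I would record the one structural fact that drives everything: each $\cum^\pi$ is non-decreasing in $w$ (it is a cumulative distribution), hence so is its pointwise minimum $\cum^*$.

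For the upper bound, fix any policy $\pi$. Since $\cum^\pi(w) \geq \cum^*(w)$ for every $w$, the sublevel set $\{w : \cum^*(w) \geq \tau\}$ is contained in $\{w : \cum^\pi(w) \geq \tau\}$, so taking minima with respect to $\prec_{\mathcal W}$ gives $\lquant_\tau^\pi \preceq_{\mathcal W} w^\dagger$; maximizing over $\pi$ yields $\lquant^* \preceq_{\mathcal W} w^\dagger$.

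For the reverse inequality I would exhibit a single policy whose lower quantile is at least $w^\dagger$. If $w^\dagger = w_{\min}$ this is automatic. Otherwise let $w^- = prec(w^\dagger)$; by minimality of $w^\dagger$ we have $\cum^*(w^-) < \tau$. The crucial point is that $\min_\pi \cum^\pi(w^-)$ is \emph{attained}: minimizing $\cum^\pi(w^-)$ is the same as maximizing $\mathbb E_\pi[U_{w^-}^{\prec_{\mathcal W}}(w(H_T))]$, an expected-utility problem solvable by the functional backward induction of Section~\ref{sec:dp}, so some policy $\pi^*$ achieves $\cum^{\pi^*}(w^-) = \cum^*(w^-) < \tau$. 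By monotonicity of $\cum^{\pi^*}$, every $w \preceq_{\mathcal W} w^-$ has cumulative value below $\tau$, so $\lquant_\tau^{\pi^*} = \min\{w : \cum^{\pi^*}(w) \geq \tau\} \succeq_{\mathcal W} w^\dagger$, and therefore $\lquant^* \succeq_{\mathcal W} w^\dagger$. Combining the two inequalities gives the claim.

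The main obstacle is conceptual rather than computational: because the policy attaining $\min_\pi \cum^\pi(w)$ may differ from one $w$ to another, no single policy need realize $\cum^*$ everywhere, and so it is not immediate that the optimal quantile is achievable at all. The resolution is that the lower quantile depends on $\cum^\pi$ only through the \emph{single} threshold crossing at $w^-$; controlling $\cum^\pi$ at that one wealth level, which one policy can do, is enough, and monotonicity then propagates the bound to all smaller wealth levels. I would also treat the boundary case $w^\dagger = w_{\min}$ separately, since there $prec(w^\dagger)$ is undefined, and note that the stated reformulation in terms of $\dec_\prec^*$ follows immediately from $\dec_\prec^\pi(w) = 1 - \cum^\pi(w)$, which turns the pointwise $\min$ over $\cum^\pi$ into a pointwise $\max$ over $\dec_\prec^\pi$.
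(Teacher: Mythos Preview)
Your argument is correct and follows essentially the same two-inequality structure as the paper's proof (the paper phrases each half as a contradiction, but the content is identical: one direction uses $F^\pi \ge F^*$ pointwise, the other exhibits a policy witnessing $F^\pi(w^-)<\tau$ at the predecessor of the threshold). One remark: your appeal to functional backward induction for attainment of $\min_\pi F^\pi(w^-)$ is unnecessary, since from $F^*(w^-)<\tau$ alone (even as an infimum) some policy $\pi^*$ already satisfies $F^{\pi^*}(w^-)<\tau$, which is all the rest of your argument uses; the paper exploits exactly this shortcut.
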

\begin{proof}
We recall that for any policy $\pi$, $F^{\pi}$ is nondecreasing and that consequently $F^*$ is also nondecreasing. 
Let $w_1 = \max_\pi\min_w\{w\in\mathcal{W}_T|F^{\pi}(w)\geq\tau\}$ and let $w_2 =  \min\{w : \cum^{*}(w) \geq \tau \}$.
By contradiction, assume $w_1 > w_2$. Then there exists $\pi$ such that $F^{\pi}(w_1)\geq\tau$ and $F^{\pi}(w)<\tau$, $\forall w < w_1$. Thus $F^{\pi}(w_2)<\tau$ which contradicts the definition of $w_2$. Now, assume $w_2 > w_1$. Then $F^*(w_1)<\tau$. Thus, there exists $\pi$ such that $F^{\pi}(w_1)<\tau$ and $\underline{q}_{\tau}^{\pi} > w_1$ which contradicts the definition of $w_1$.
\end{proof}

\begin{lem}
The optimal upper $\tau$-quantile $\uquant^*$ satisfies:
\begin{align*}
\uquant^* &= \max\{w : \dec^{*}(w) \geq 1 - \tau \} \\
\dec^{*}(w) &= \max_\pi \dec^{\pi}(w)  \quad \forall w \in \mathcal W 
\end{align*}
\end{lem}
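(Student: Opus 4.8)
The plan is to prove the two displayed equalities exactly as in Lemma~\ref{lem:3}, but dualizing every ingredient: the nondecreasing cumulative $\cum^\pi$ is replaced by the nonincreasing decumulative $\dec^\pi$, and the $\min$ operators become $\max$ operators. The first thing I would record is the monotonicity fact that, for every policy $\pi$, $\dec^\pi$ is nonincreasing with respect to $\preceq_{\mathcal W}$: enlarging $w$ can only drop terms from the sum $\sum_{w \preceq_{\mathcal W} w'} \dist^\pi(w')$. Taking a pointwise maximum preserves this, so $\dec^*(w) = \max_\pi \dec^\pi(w)$ is nonincreasing as well; in particular the set $\{w : \dec^*(w) \geq 1-\tau\}$ is an initial segment of the finite ordered set $\mathcal W_T$ and its maximum $w_2 := \max\{w : \dec^*(w) \geq 1-\tau\}$ is well defined.

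Next I would set $w_1 := \uquant^* = \max_\pi \uquant_\tau^\pi = \max_\pi \max\{w : \dec^\pi(w) \geq 1-\tau\}$ and argue $w_1 = w_2$ by excluding both strict inequalities, since $\preceq_{\mathcal W}$ is total. Suppose $w_1 \succ_{\mathcal W} w_2$. Let $\pi$ attain $w_1$, so $\dec^\pi(w_1) \geq 1-\tau$; then $\dec^*(w_1) \geq \dec^\pi(w_1) \geq 1-\tau$, which contradicts $w_1 \succ_{\mathcal W} w_2 = \max\{w : \dec^*(w) \geq 1-\tau\}$. Conversely, suppose $w_2 \succ_{\mathcal W} w_1$. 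Since $\dec^*(w_2) = \max_\pi \dec^\pi(w_2) \geq 1-\tau$, there is a policy $\pi$ with $\dec^\pi(w_2) \geq 1-\tau$, whence $\uquant_\tau^\pi = \max\{w : \dec^\pi(w) \geq 1-\tau\} \succeq_{\mathcal W} w_2 \succ_{\mathcal W} w_1 = \max_\pi \uquant_\tau^\pi$, a contradiction. Hence $w_1 = w_2$, which is precisely the claimed characterization together with the definition of $\dec^*$.

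The contradiction argument itself is symmetric to Lemma~\ref{lem:3}, so the step I expect to require the most care is the attainment of the maxima. Finiteness of $\mathcal W_T$ and the existence of $w_{\min}$ and $w_{\max}$ dispose of every $\max$ over wealth levels. The only genuinely new point, compared with the lower-quantile case, is that in the second branch I must extract an actual policy $\pi$ realizing $\dec^*(w_2) \geq 1-\tau$, i.e. the maximum in $\dec^*(w) = \max_\pi \dec^\pi(w)$ must be attained rather than merely approached. This is justified by observing that, for fixed $w$, maximizing $\dec^\pi(w) = \mathbb E_\pi[ U_w^{\preceq_{\mathcal W}}(w(H_T)) ]$ is the expected-utility problem of Section~\ref{sec:dp}, which admits an optimal deterministic wealth-Markovian policy; as there are only finitely many such policies, the supremum is a genuine maximum, closing the argument.
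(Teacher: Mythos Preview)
Your proof is correct and follows essentially the same route as the paper: define $w_1 = \max_\pi \uquant_\tau^\pi$ and $w_2 = \max\{w : \dec^*(w) \ge 1-\tau\}$, then use a witness policy in each direction to obtain $w_1 \preceq_{\mathcal W} w_2$ and $w_2 \preceq_{\mathcal W} w_1$ (the paper argues these two inequalities directly rather than by contradiction, but the content is identical). The only addition is your explicit justification that $\max_\pi \dec^\pi(w)$ is attained via the expected-utility reformulation of Section~\ref{sec:dp}; the paper leaves this point implicit.
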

\begin{proof}
We recall that for any policy $\pi$, $G^{\pi}$ is nonincreasing and that consequently $G^*$ is also nonincreasing. 
Let $w_1 = \max_\pi\max_w\{w\in\mathcal{W}_T|G^{\pi}(w)\geq1 - \tau\}$ and let $w_2= \max\{w : \dec^{*}(w) \geq 1 - \tau \}$.
By definition of $w_1$, there exists a policy $\pi$ such that $G^\pi(w_1)\geq 1-\tau$, thus $G^*(w_1)\geq 1-\tau$ and $w_2 \geq w_1$.
By definition of $w_2$, there exists a policy $\pi$ such that $G^\pi(w_2)\geq 1-\tau$, thus $\max_w\{w\in\mathcal{W}_T|G^{\pi}(w)\geq1 - \tau\}\geq w_2$ and $w_1 \geq w_2$.
\end{proof}

The following example shows that $\cum^*(\lquant^*)$ (see Equation~\ref{eq:pblq1}) may not be attained by an optimal policy (for the lower quantile):
\begin{ex}
Let $\cum_1$ and $\cum_2$ be two cumulatives defined over three elements $w_1 \prec_{\mathcal W} w_2 \prec_{\mathcal W} w_3$ with the following probabilities:
$\cum_1 = (0.5, 0.5, 1)$ and $\cum_2 = (0, 0.6, 1)$.
The lower $0.5$-quantile of $\cum_1$ is $w_1$ and that of $\cum_2$ is $w_2$. 
Therefore the optimal lower quantile is $\lquant^* = w_2$.
We have $\cum^* = (0, 0.5, 1)$ and $\cum^*(\lquant^*) = 0.5$, which is attained by $\cum_1$.
\end{ex}
This implies that $solve(\mathcal M, \lquant^*)$ may return a non-optimal policy when $\lquant^* \succ_{\mathcal W} \min(\mathcal W_T)$. 
For $w\in \mathcal{W}_T$, we define $prec(w)$ as the most preferred element of $\mathcal{W}_T$ such that $prec(w) \prec w$. 
If there are no element $w' \in \mathcal{W}_T$ such that $w' \prec w$,  $prec(w)$ is defined as $w$.
The optimal policy can be found using the following property:
\begin{lem}
Any policy $\pi^*$ such that $\cum^{\pi^*}(prec(\lquant^*)) = \cum^{*}(prec(\lquant^*))$ is an optimal policy with regard to the lower quantile criterion.
\end{lem}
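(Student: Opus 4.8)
The plan is to show that the hypothesis forces $\lquant_\tau^{\pi^*} = \lquant^*$, i.e.\ that $\pi^*$ attains the optimal lower quantile. Since $\lquant^* = \max_\pi \lquant_\tau^{\pi}$ by definition of the optimal lower quantile, one inequality comes for free: $\lquant_\tau^{\pi^*} \preceq_{\mathcal W} \lquant^*$ holds for every policy, in particular for $\pi^*$. Hence the entire task reduces to establishing the reverse bound $\lquant_\tau^{\pi^*} \succeq_{\mathcal W} \lquant^*$, from which equality, and therefore optimality, follows immediately.

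Recalling that $\lquant_\tau^{\pi^*} = \min\{w : \cum^{\pi^*}(w) \geq \tau\}$, the bound $\lquant_\tau^{\pi^*} \succeq_{\mathcal W} \lquant^*$ is equivalent to showing that $\cum^{\pi^*}(w) < \tau$ for every $w \prec_{\mathcal W} \lquant^*$. This is where $prec(\lquant^*)$ plays its role: by definition it is the $\preceq_{\mathcal W}$-largest element strictly below $\lquant^*$, so every $w \prec_{\mathcal W} \lquant^*$ satisfies $w \preceq_{\mathcal W} prec(\lquant^*)$. Because $\cum^{\pi^*}$ is nondecreasing, it then suffices to verify the single inequality $\cum^{\pi^*}(prec(\lquant^*)) < \tau$, since monotonicity propagates it downward to all smaller wealth levels.

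To establish that single inequality I would chain the hypothesis with the characterization of $\lquant^*$ from Lemma~\ref{lem:3}. The hypothesis gives $\cum^{\pi^*}(prec(\lquant^*)) = \cum^{*}(prec(\lquant^*))$, so $\pi^*$ achieves the pointwise minimum cumulative $\cum^{*}(w) = \min_\pi \cum^{\pi}(w)$ at $w = prec(\lquant^*)$. Since $prec(\lquant^*) \prec_{\mathcal W} \lquant^* = \min\{w : \cum^{*}(w) \geq \tau\}$, the point $prec(\lquant^*)$ lies strictly below the minimizer, whence $\cum^{*}(prec(\lquant^*)) < \tau$; combining the two equalities yields $\cum^{\pi^*}(prec(\lquant^*)) < \tau$, as required.

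Finally I would dispose of the degenerate case in which $prec(\lquant^*) = \lquant^*$, which by the definition of $prec$ happens exactly when $\lquant^* = w_{\min}$, i.e.\ no wealth level lies strictly below the target. In that case every policy $\pi$ trivially satisfies $\lquant_\tau^{\pi} \succeq_{\mathcal W} w_{\min} = \lquant^*$ (a lower quantile is always an element of $\mathcal W_T$), so together with $\lquant_\tau^{\pi} \preceq_{\mathcal W} \lquant^*$ every policy is optimal and the statement holds vacuously. The only genuinely delicate point is the monotonicity/$prec$ argument of the second paragraph: one must make sure that $prec(\lquant^*)$ really dominates all wealth levels below the target, so that controlling $\cum^{\pi^*}$ at this single point is enough; the remainder is routine bookkeeping with the definitions recalled in Lemma~\ref{lem:3}.
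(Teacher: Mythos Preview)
Your proof is correct and follows essentially the same route as the paper's: both dispose of the degenerate case $\lquant^* = \min(\mathcal W_T)$ separately, and in the main case both hinge on the fact that $\cum^*(prec(\lquant^*)) < \tau$, combined with the hypothesis and monotonicity of the cumulative. The only cosmetic difference is that you argue directly from the characterization in Lemma~\ref{lem:3}, whereas the paper phrases the same step as a contradiction obtained by routing through an actually optimal policy; your direct version is, if anything, slightly cleaner.
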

\begin{proof}
Assume that $\lquant^* \succ_{\mathcal W} \min(\mathcal W_T)$. 
Otherwise the lemma is clearly true.
Assume by contradiction that there is a non-optimal policy $\pi$ such that $\cum^{\pi}(prec(\lquant^*)) = \cum^{*}(prec(\lquant^*))$.
Let $\lquant$ be the lower $\tau$- quantile of policy $\pi$, $\lquant^*$ be the optimal lower quantile and $\pi^*$ be an optimal policy.
By assumption, we have $\lquant \preceq_{\mathcal W} prec(\lquant^*) \prec_{\mathcal W} \lquant^*$ and $\cum^{\pi^*}(prec(\lquant^*)) \ge \cum^{\pi}(prec(\lquant^*))$.
As a cumulative is non-decreasing, we have $\cum^{\pi}(prec(\lquant^*)) \ge \cum^\pi(\lquant) \ge \tau$, which contradicts the fact that the lower quantile of $\pi^*$ is $\lquant^*$.
\end{proof}

Before proving that Algorithm~\ref{alg:lqo} is correct, we introduce a lemma that gives sufficient conditions for a policy to be approximately optimal. 
\begin{lem}
Let $\pi$ be a policy for which there exists $w$ such that $d(w,\lquant_{\tau}^{*}) \leq \varepsilon$ (resp. $d(w,\uquant_{\tau}^{*}) \leq \varepsilon$) and:
\begin{align*}
\cum^{\pi}(w) < \tau \hspace{0.5cm} (\text{resp. } \dec^{\pi}(w) \ge 1 - \tau ).
\end{align*}
Then $\pi$ is $\varepsilon$-optimal for the lower (resp. upper) $\tau$-quantile criterion.
\label{lem:epsopt}
\end{lem}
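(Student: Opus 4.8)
The plan is to prove both statements by an order-sandwiching argument, reducing $\varepsilon$-optimality to the monotonicity of the distance along the total order $\preceq_{\mathcal W}$. I would handle the lower quantile case first and note that the upper case is symmetric. The key observation is that the hypothesis on $\cum^\pi(w)$ pins down the location of $\lquant_\tau^\pi$ relative to $w$, while the definition of $\lquant_\tau^*$ as a maximum over policies pins it down relative to the optimum. First I would locate $\lquant_\tau^\pi$ above $w$: since $\cum^\pi$ is nondecreasing and $\cum^\pi(w) < \tau$, every $w' \preceq_{\mathcal W} w$ satisfies $\cum^\pi(w') \le \cum^\pi(w) < \tau$, so no such $w'$ belongs to the set $\{w' : \cum^\pi(w') \ge \tau\}$; by the definition $\lquant_\tau^\pi = \min\{w' : \cum^\pi(w') \ge \tau\}$ this forces $w \prec_{\mathcal W} \lquant_\tau^\pi$. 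Second, optimality gives the upper bound: since $\lquant_\tau^* = \max_\pi \lquant_\tau^\pi$, we have $\lquant_\tau^\pi \preceq_{\mathcal W} \lquant_\tau^*$. Together these yield the chain $w \preceq_{\mathcal W} \lquant_\tau^\pi \preceq_{\mathcal W} \lquant_\tau^*$, so $\lquant_\tau^\pi$ lies between $w$ and the optimal quantile in the order.

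The concluding step is to invoke that $d$ is consistent with $\preceq_{\mathcal W}$, which I read as monotonicity along the order: whenever $a \preceq_{\mathcal W} b \preceq_{\mathcal W} c$, one has $d(b,c) \le d(a,c)$. Applying this to the chain above with $c = \lquant_\tau^*$ gives $d(\lquant_\tau^\pi, \lquant_\tau^*) \le d(w, \lquant_\tau^*) \le \varepsilon$, which is exactly the definition of $\varepsilon$-optimality for the lower quantile. For the upper quantile the argument mirrors this: $\dec^\pi$ is nonincreasing and $\dec^\pi(w) \ge 1 - \tau$, so $w$ is a feasible candidate in $\{w' : \dec^\pi(w') \ge 1-\tau\}$, whence $w \preceq_{\mathcal W} \uquant_\tau^\pi$ by the definition $\uquant_\tau^\pi = \max\{w' : \dec^\pi(w') \ge 1-\tau\}$; optimality gives $\uquant_\tau^\pi \preceq_{\mathcal W} \uquant_\tau^*$, producing the analogous chain $w \preceq_{\mathcal W} \uquant_\tau^\pi \preceq_{\mathcal W} \uquant_\tau^*$, and the same monotonicity of $d$ yields $d(\uquant_\tau^\pi, \uquant_\tau^*) \le d(w, \uquant_\tau^*) \le \varepsilon$.

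The only delicate point, and the step I expect to require care, is the precise reading of the consistency of the distance with $\preceq_{\mathcal W}$: the whole argument rests on the monotonicity property that an element lying between two others in the order is no farther from the larger endpoint than the smaller endpoint is. I would make this invocation explicit, since everything else is routine bookkeeping with the monotonicity of $\cum^\pi$ and $\dec^\pi$ and the definitions of the lower and upper quantiles and of the optimal quantile. No appeal to the structure of the MDP is needed; the lemma is purely a statement about order statistics of the induced wealth distribution.
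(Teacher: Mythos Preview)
Your proposal is correct and follows essentially the same sandwiching argument as the paper: the paper's proof also uses monotonicity of $\cum^\pi$ (resp.\ $\dec^\pi$) together with optimality to place $\lquant_\tau^\pi \in [w,\lquant_\tau^*]$ (resp.\ $\uquant_\tau^\pi \in [w,\uquant_\tau^*]$), and then concludes by consistency of $d$ with the order. Your version is simply more explicit about each inequality and about the monotonicity property of $d$ that the paper invokes tacitly.
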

\begin{proof}
Indeed, for such a policy, as $\cum^{\pi}(w)$ is nondecreasing (resp. $\dec^{\pi}(w)$ is nonincreasing), we have that $\lquant_\tau^{\pi}\in[w,\lquant_\tau^{*}]$ (resp. $\uquant_\tau^{\pi}\in[w,\uquant_\tau^{*}]$) and thus $d(\lquant_\tau^{\pi}, \lquant_\tau^{*}) \leq \varepsilon$ (resp. $d(\uquant_\tau^{\pi}, \uquant_\tau^{*}) \leq \varepsilon)$ .
\end{proof}

\setcounter{pro}{0}
\begin{pro}
Algorithm \ref{alg:lqo} returns an $\varepsilon$-optimal policy for the lower (or upper) quantile criterion.
\end{pro}
\begin{proof}
If $\mathcal{W}_T\subset \mathbb{R}$ we have seen that the algorithm terminates in $\ceil*{\log_2\frac{d(w_{\max}, 0)}{\varepsilon}}$ iterations. 
In the most general setting, the algorithm terminates, because in the worst case we will check all the $m$ possible final wealth values.
Let $\pi$ be the policy returned by the algorithm.
For the lower (resp. upper) quantile, when the algorithm terminates, $d(\lquant_{\tau}^{*},\underline{w})$ (resp. $d(\uquant_{\tau}^{*},\underline{w})$) $\leq d(\overline{w},\underline{w}) \leq \varepsilon$ and $\cum^{\pi^*}(\underline{w}) < \tau$ (resp. $\dec^{\pi}(\underline{w}) \ge 1 - \tau$). Thus, we can apply Lemma \ref{lem:epsopt} which concludes the proof. 
\end{proof}

\begin{pro}
Optimal policies for the lower or upper quantile at horizon $T$ can be found as deterministic wealth-Markovian policies.
\end{pro}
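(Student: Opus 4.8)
The plan is to reduce the quantile-optimization problem to a single expected-utility problem with a fixed target threshold, and then to exploit the fact that this expected-utility problem admits deterministic wealth-Markovian optimal policies. The point is that, while the quantile criterion itself is non-linear and non-dynamically-consistent, the threshold-exceedance problems $\max_\pi \dec_\prec^\pi(w)$ and $\max_\pi \dec^\pi(w)$ are ordinary finite-horizon MDPs once the state is augmented with the accumulated wealth level.

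First I would recall the reduction of Section~\ref{sec:dp}: for a fixed threshold $w$ and $\triangleleft \in \{\prec_{\mathcal W}, \preceq_{\mathcal W}\}$, maximizing $\mathbb E_\pi[U_w^\triangleleft(w(H_T))]$ over all policies $\pi$ equals $\max_\pi \dec_\prec^\pi(w)$ (resp. $\max_\pi \dec^\pi(w)$), since $\mathbb E_\pi[U_w^\triangleleft(w(H_T))] = \mathbb P[w \triangleleft w(H_T)\,|\,\pi]$. Working over the augmented state space $\mathcal S \times \mathcal W$, with transitions $(s, w) \mapsto (s', w \circ r(s,a))$ of probability $\mathcal P(s,a,s')$ and a single terminal payoff $U_w^\triangleleft$ collected at horizon $T$, this is a standard expected-total-reward MDP. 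By the classical theory of such MDPs, an optimal policy can be taken deterministic and Markovian on the augmented state space, i.e. deterministic and wealth-Markovian on the original MDP. This is exactly the structure of the policy returned by functional backward induction (Algorithm~\ref{alg:biuf}), so for every fixed $w$ the routine $solve(\mathcal M, w)$ yields a deterministic wealth-Markovian maximizer.

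It then remains to identify a single threshold at which such an expected-utility optimum is simultaneously quantile-optimal. For the upper quantile, Lemma~\ref{lem:4} gives $\uquant^* = \max\{w : \max_\pi \dec^\pi(w) \ge 1 - \tau\}$, so the policy $\pi$ returned by $solve(\mathcal M, \uquant^*)$ satisfies $\dec^\pi(\uquant^*) \ge 1 - \tau$, whence $\uquant_\tau^\pi \ge \uquant^*$ and, by optimality, $\uquant_\tau^\pi = \uquant^*$; this $\pi$ is deterministic wealth-Markovian by the previous paragraph. For the lower quantile, since $\max_\pi \dec_\prec^\pi(w) = 1 - \cum^*(w)$, the policy returned by $solve(\mathcal M, prec(\lquant^*))$ attains $\cum^\pi(prec(\lquant^*)) = \cum^*(prec(\lquant^*))$, and the $prec$ lemma then guarantees it is optimal for the lower quantile. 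In both cases the witnessing policy has the claimed structure, which establishes the proposition.

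The main obstacle I anticipate is not the augmented-MDP reduction, which is routine, but justifying that maximizing the threshold-exceedance probability at the \emph{right single} threshold already maximizes the quantile. For the lower quantile the subtlety, illustrated by the earlier example where $\cum^*(\lquant^*)$ is not attained by an optimal policy, is that one must evaluate at $prec(\lquant^*)$ rather than at $\lquant^*$ itself; this is precisely what the $prec$ lemma repairs. Once the threshold is chosen correctly, the deterministic wealth-Markovian policy produced by functional backward induction is optimal for the quantile, so neither randomization nor full history dependence is required.
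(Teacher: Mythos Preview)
Your proposal is correct and uses the same core ingredients as the paper: the reduction of threshold-exceedance to an expected-utility problem on the wealth-augmented MDP, the fact that $solve(\mathcal M,\cdot)$ returns deterministic wealth-Markovian policies, and the $prec$ lemma for the lower-quantile case.

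The paper's organization is slightly different from yours, though. Rather than first identifying the optimal threshold $\uquant^*$ (resp.\ $prec(\lquant^*)$) and showing that $solve$ at that threshold produces an optimal policy, the paper argues domination: for \emph{every} policy $\pi$, the deterministic wealth-Markovian policy returned by $solve(\mathcal M,\uquant_\tau^\pi)$ (resp.\ $solve(\mathcal M,prec(\lquant_\tau^\pi))$) is at least as good as $\pi$; then finiteness of the class of deterministic wealth-Markovian policies yields existence of a maximizer. Your approach is more direct but leans on Lemmas~\ref{lem:3} and~\ref{lem:4} (and implicitly the finiteness of $\mathcal W_T$) to guarantee that $\lquant^*$ and $\uquant^*$ exist before you can invoke $solve$ at those thresholds; the paper's domination-plus-finiteness argument derives existence and structure simultaneously without ever needing to name the optimal threshold in advance. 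Both routes are valid and short.
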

\begin{proof}
We recall that for the lower (resp. upper) quantile criterion, procedure $solve(\mathcal{M},w)$ returns the policy which minimizes $F^\pi(w)$ (resp. maximizes $G^\pi(w)$).
Thus, for any policy $\pi$, by definition of quantiles, $solve(\mathcal{M},prec(\lquant_\tau^{\pi}))$ (resp. $solve(\mathcal{M},\uquant_\tau^{\pi})$) returns a deterministic wealth-Markovian policy, which is at least as good as $\pi$ regarding the lower (resp. upper) quantile criterion. 
As the set of deterministic wealth-Markovian policies is finite in the finite horizon case, taking the one with highest lower (resp. upper) quantile concludes the proof.
\end{proof}

\begin{pro}
Optimal policies for the lower or upper quantile in the infinite horizon setting can be found as stationary deterministic wealth-Markovian policies in the two following cases:
\begin{description}
\item[(i)] 
$\forall (s, a) \in \mathcal{S}\times\mathcal{A}, r(s,a) \le 0$.
\item[(ii)] 
$\forall (s, a) \in \mathcal{S}\times\mathcal{A}, r(s,a) \ge 0$. Furthermore, we require the existence of a finite upper bound on the optimal lower and upper quantiles.
\end{description}
\end{pro}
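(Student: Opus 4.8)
The plan is to reduce the quantile optimization, in each case, to a single maximal-reachability computation on a finite augmented MDP, for which classical theory guarantees a stationary deterministic optimal policy. First I would recall from Lemmas~\ref{lem:3} and~\ref{lem:4} (the latter together with the supplementary lemma on $prec(\lquant^*)$) that an optimal policy for the upper quantile is obtained as $solve(\mathcal M, \uquant^*)$ and one for the lower quantile as $solve(\mathcal M, prec(\lquant^*))$: indeed $solve(\mathcal M, \uquant^*)$ returns a policy $\pi$ with $\dec^\pi(\uquant^*) = \dec^*(\uquant^*) \ge 1-\tau$, hence $\uquant_\tau^\pi \succeq_{\mathcal W} \uquant^*$ and $\pi$ is optimal, and symmetrically for the lower quantile. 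It therefore suffices to show that, for the relevant fixed threshold $w$, the value $\max_\pi \dec^\pi(w)$ (resp.\ $\max_\pi \dec_\prec^\pi(w) = 1 - \min_\pi \cum^\pi(w)$) is attained by a stationary deterministic wealth-Markovian policy. The thresholds in question are finite: in case (i) every reachable wealth is $\preceq_{\mathcal W} w_0$ since rewards are nonpositive, and in case (ii) finiteness is exactly the extra hypothesis on the optimal quantiles.

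Next I would pass to the augmented MDP $\bar{\mathcal M}$ whose states are pairs $(s, w')$ with $w' \in \mathcal W$ the wealth accumulated so far, with transitions $(s, w') \xrightarrow{a} (s', w' \circ r(s,a))$ of probability $\mathcal P(s,a,s')$. Because $w'$ is a deterministic function of the history, policies on $\mathcal M$ and on $\bar{\mathcal M}$ are in bijection, a stationary (memoryless) policy on $\bar{\mathcal M}$ being precisely a stationary deterministic wealth-Markovian policy on $\mathcal M$. I would then identify $solve(\mathcal M, w)$ with a reachability objective on $\bar{\mathcal M}$, using the monotonicity of the wealth as the key lever. In case (i) rewards are $\le 0$, so $w'$ is nonincreasing; with $U_w^{\preceq_\mathcal W}$ (resp.\ $U_w^{\prec_\mathcal W}$) the event $\{w \preceq_{\mathcal W} w(H_\infty)\}$ (resp.\ $\{w \prec_{\mathcal W} w(H_\infty)\}$) coincides with never entering the absorbing ``dead'' set $\{(s,w') : w' \prec_{\mathcal W} w\}$ (resp.\ $\{(s,w') : w' \preceq_{\mathcal W} w\}$), so $solve$ maximizes the probability of staying forever in the ``alive'' set. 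In case (ii) rewards are $\ge 0$, so $w'$ is nondecreasing and the same events coincide with eventually reaching the absorbing ``success'' set $\{(s,w') : w' \succeq_{\mathcal W} w\}$ (resp.\ $\{(s,w') : w' \succ_{\mathcal W} w\}$), so $solve$ maximizes a reachability probability.

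Crucially, in both cases the set of alive augmented states is finite: each reachable wealth differs from $w_0$ by a sum of finitely many reward values, and the constraint that the wealth lie strictly between $w$ and $w_0$ (case (i)) or between $w_0$ and $w$ (case (ii)) caps the total accumulated magnitude, so only finitely many distinct wealth levels occur and $\bar{\mathcal M}$ restricted to the alive states is a finite MDP. On a finite MDP the maximal probability of reaching (or, by complementation, of avoiding) a fixed absorbing set is attained, over all history-dependent randomized policies, by a stationary deterministic one; applying this and reading the resulting augmented-state policy back on $\mathcal M$ yields the desired stationary deterministic wealth-Markovian optimizer of $solve(\mathcal M, w)$, hence of the quantile criterion. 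I expect the main obstacle to be precisely the bridge between the asymptotic event $\{w \triangleleft w(H_\infty)\}$, defined through the limiting wealth, and the combinatorial reachability/safety event on $\bar{\mathcal M}$: one must argue that, thanks to monotonicity and to the discreteness of the reachable wealth levels in the relevant bounded range, the wealth is eventually constant along alive trajectories, so that the limit lies on the correct side of $w$ if and only if the trajectory never leaves the alive set. Care is also needed to ensure $\uquant^*$ and $prec(\lquant^*)$ are finite and attained, which is immediate in case (i) and is guaranteed by the boundedness hypothesis in case (ii).
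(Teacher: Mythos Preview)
Your approach is essentially the paper's: pass to the wealth-augmented MDP, use the monotonicity of the wealth under the sign hypothesis to truncate to a finite state space, and invoke a classical result for that finite MDP (you phrase it as maximal reachability/safety; the paper sets up an expected-total-reward problem with reward $-1$ for crossing the threshold---these are equivalent formulations).

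One organizational point deserves care. You fix the threshold at $w=\uquant^*$ (respectively $prec(\lquant^*)$) and then show that $solve(\mathcal M,w)$ is realized by a stationary deterministic wealth-Markovian policy. But this presupposes that $\uquant^*$ is actually \emph{attained} as a maximum over policies in the infinite-horizon setting, which is part of what the proposition asserts; your remark that this is ``immediate in case (i)'' does not settle it. The paper sidesteps the circularity: it shows that for \emph{every} policy $\pi$ there is a stationary deterministic wealth-Markovian policy whose upper quantile is at least $\uquant_\tau^{\pi}$ (by solving the auxiliary problem at threshold $\uquant_\tau^{\pi}$), and then, after collapsing all wealth levels below a fixed finite $q$ so that only finitely many relevant augmented states remain, notes that only finitely many stationary deterministic wealth-Markovian policies need be distinguished and takes the best among them. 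Existence of the optimum thus emerges as a conclusion rather than an assumption. Your argument can be repaired by adopting the same order of quantifiers.
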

\begin{proof}
We prove for the upper quantile and case (i), the other cases are similar.
If all policies have $-\infty$ as quantile, they are all optimal.
Now, if one policy has a finite lower quantile $q\in\mathbb{R}^{-}$, the optimal quantile must be greater than or equal to $q$.
From the original MDP, consider the state-augmented MDP whose state space is defined by $\overline{\mathcal{S}} =\{(s,w)|s,w \in \mathcal{S}\times\mathcal{W}\}$.
In $\overline{\mathcal{S}}$, regroup all states having a wealth level strictly less than $q$ in a single absorbing state. 
Indeed, as the optimal upper quantile is greater than $q$ and $r(s,a) \le 0$, $\forall s,a$, the choices of the policies in those states are irrelevant to find an optimal policy w.r.t the upper quantile criterion. 
Note that the resulting augmented state space $\overline{\mathcal{S}}_{<q}$ is finite. 
In this MDP, we use reward functions parametrized by a value $x \in \mathcal{W}$ defined as follows : 
\begin{align*}
r_x((s,w),a) =   
   \left \{
   \begin{array}{r c l}
      -1 & & \text{ if } w \ge x \text{ and } w + r(s,a) < x \\
      0  & & \text{else.} \\
   \end{array}
   \right .
\end{align*}
A policy solving this MDP w.r.t the expectation of total reward criterion maximizes the probability of getting an episode with a wealth level greater than or equal to $x$. 
According to Puterman (\citeyear{Puterman94}, Theorem 7.1.9), such a policy can be found as a stationary deterministic Markovian policy in the augmented MDP. 
Stated differently, there exists a stationary deterministic wealth-Markovian optimal policy in the original MDP. 
Then, for any policy $\pi$, the stationary deterministic wealth-Markovian policy which is optimal when using reward function $r_{\uquant_\tau^{\pi}}$ (and expectation of total reward) is at least as good as $\pi$ regarding the upper quantile criterion. 
By partitioning those policies by regrouping the ones that agree on $\overline{\mathcal{S}}_{<q}$ we reduce the set of stationary deterministic wealth-Markovian policies to a finite set. 
By taking the ``best one'' in this set, we obtain a stationary deterministic wealth-Markovian optimal policy.
\end{proof}

\end{document}